\pdfoutput=1

\documentclass[11pt]{article}

\usepackage[final]{acl}

\usepackage{times}
\usepackage{latexsym}
\usepackage{graphicx}
\usepackage{svg}

\usepackage[T1]{fontenc}

\usepackage{amssymb}
\usepackage{amsthm}
\newtheorem{theorem}{Theorem}
\newtheorem{lemma}{Lemma}

\newtheorem{definition}{Definition}

\newenvironment{customthm}[1]
  {\innercustomthm}
  {\endinnercustomthm}

\usepackage[utf8]{inputenc}

\usepackage{microtype}

\usepackage{inconsolata}
\usepackage{amsmath}
\DeclareMathOperator*{\argmaxA}{arg\,max}

%
%

\title{A Fairness-Driven Method for Learning Human-Compatible Negotiation Strategies}


\author{Ryan Shea \\
  Columbia University, NY \\
  \texttt{rs4235@columbia.edu} \\\And
  Zhou Yu \\
  Columbia University, NY \\
  \texttt{zy2461@columbia.edu} \\}

\begin{document}
\maketitle
\begin{abstract}
Despite recent advancements in AI and NLP, negotiation remains a difficult domain for AI agents. Traditional game theoretic approaches that have worked well for two-player zero-sum games struggle in the context of negotiation due to their inability to learn human-compatible strategies. On the other hand, approaches that only use human data tend to be domain-specific and lack the theoretical guarantees provided by strategies grounded in game theory. Motivated by the notion of fairness as a criterion for optimality in general sum games, we propose a negotiation framework called FDHC which incorporates fairness into both the reward design and search to learn human-compatible negotiation strategies. Our method includes a novel, RL+search technique called LGM-Zero which leverages a pre-trained language model to retrieve human-compatible offers from large action spaces. Our results show that our method is able to achieve more egalitarian negotiation outcomes and improve negotiation quality.
\end{abstract}

\section{Introduction}
Recent advancements in AI and NLP have led researchers to develop techniques to build autonomous agents which can achieve human-level performance in bargaining games such as Deal-or-no-Deal \cite{Sengupta-2021}. These techniques can be separated into two broad categories: game theoretic approaches and data driven approaches. 

Game theoretic approaches to negotiation attempt to build negotiation agents without observing any human data. This class of algorithms is typically applied to two-player zero-sum games which do not require agents to learn human-compatible strategies in order to be successful \cite{alphazero-silver, brown2020rebel, Vinyals2019starcraft}. However, other classes of games such as negotiation require cooperation with humans in order to be successful \cite{cicero}. As a result, traditional game theoretic methods have failed to achieve quality performance in the realm of negotiation \cite{li2023combining}.

Data driven approaches to negotiation learn a human-like strategy directly using data on a specific negotiation domain \cite{verma-etal-2022-chai, lewis-etal-2017-deal, he-etal-2018-decoupling}. Unfortunately, collecting human data is expensive and the strategies in the data may not effectively generalize to other negotiation domains. Furthermore, these methods lack the desirable properties that game theoretic methods offer such as controllability.


We propose an approach which bridges game theoretic and data driven approaches called \textbf{F}airness-\textbf{D}riven \textbf{H}uman-\textbf{C}ompatible (FDHC) bargaining. This method is designed to target egalitarian outcomes, specifically the Egalitarian Bargaining Solution (EBS), which we use as a formal notion of fairness \cite{kalai1977-ebs}. We target fair outcomes as prior work has shown that fairness is a key component of human strategies and has also served as a useful notion of optimality in general-sum repeated games \cite{Tossou2020, DiGiovanni2021-self-play, KROLL2014261}. Our fairness-targeting strategy is learned with a novel \textbf{L}LM-\textbf{G}uided \textbf{M}onte Carlo tree search with \textbf{Zero} domain specific training data (LGM-Zero). LGM-Zero leverages the reasoning capabilities of LLM models \cite{kwon2023reward} to extract human-like negotiation offers from large action spaces without the need to collect additional human data. A value model trained via self-play then selects the best offer that the LLM proposes. The LLM and value model are used to guide a Monte Carlo tree search (MCTS) towards the desired outcome given by FDHC (Figure \ref{fig:framewrok}). 

We say that an action/strategy is ``human-compatible'' if a human would take a similar action or apply a similar strategy if placed in the same scenario. For example, in a situation where participants are negotiating over an item worth thousands of dollars, it would not be human-compatible for our model to offer a price down to the granularity of individual cents. Since humans tend not to do that when negotiating over items worth such large amounts. However such an offer would be perfectly valid from a game theoretic standpoint. We use the terms human-like and human-compatible interchangeably.

We implement our method on a common negotiation exercise used in business classes where two students bargain over a used car. We train a model to play as the seller in this scenario. Both the buyer and seller are given private reservation prices for the car that they cannot go beyond during the negotiation. For example, suppose the buyer can't buy the car for above \$11K and the seller cannot sell it for below \$10K. The goal of our method is to reach an agreement at the EBS solution for the game, which in this case corresponds to the midpoint between the two reservation prices (\$10,500 in the example). While we implement our method for this specific exercise, our approach generalizes to any negotiation setting that can be modeled as a Nash bargaining game. This encompasses any game involving surplus division, including multi-party and multi-issue negotiations.

Our final model uses a modular design where negotiation acts are selected according to the FDHC framework. Strategies are then realized in natural language using a LLM such as GPT-3.5 or GPT-4 \cite{openai2023gpt4}. Our contributions can be summarized as follows:

\begin{itemize}
\item We propose novel negotiation framework for bargaining called FDHC. Our framework targets the EBS of Nash bargaining games in an attempt to grant equal gain to both parties.
\item We introduce a RL+search method called LGM-Zero which utilizes a LLM and value network to extract human-compatible offers from large action spaces.
\item Our results show that our method is able to generate more egalitarian outcomes compared to several baselines. Our human evaluation also shows that our model is able to improve negotiation quality while remaining comparable to GPT-4 in human-likeness.
\end{itemize}

\section{Background}
\label{sec:back}

The \textbf{Nash bargaining game} is a game in which two or more players must divide a surplus between themselves. In the used car example given previously, the surplus would be the difference between the buyer and seller's reservation prices. We use the term \textbf{extensive form Nash bargaining game} to refer to a game in which players can propose divisions of the surplus over the course of a series of time steps. A \textbf{Nash equilibrium} is a game state in which no player can benefit from a unilateral change in strategy. 


Bargaining theory makes use of \textbf{axioms} which are rules that describe properties that a bargaining outcome satisfies \cite{Nash}. We make use of the following axioms when analysing the theoretical properties of our method. The axiom of \textbf{symmetry} says that if the players in the bargaining game are indistinguishable based on the description of the game, then they should all receive the same payoff. A \textbf{weak Pareto optimal} solution is one where any change to the outcome will make at least one party no better off. \textbf{Strong monotonicity} states that any increase in the amount of surplus being bargained over should benefit all players involved in the negotiation. Formal definitions can be found in Appendix \ref{sec:def}.

\textbf{Surplus division} is the process of dividing some commodity (often money) among a group of people. A \textbf{utility function} measures the welfare or satisfaction of a negotiator as a function of the amount of surplus they receive. A \textbf{disagreement payoff} is the amount of utility a negotiator receives if the negotiators do not reach an agreement. A \textbf{reservation price} is the minimum amount a seller is willing to sell an item for. The converse holds from the buyer’s perspective. This term is specific to single-issue negotiation.

\textbf{Action space} refers to the set of all valid actions available to an agent as it interacts with an environment. In the context of negotiation, this is the set of actions available to participants in a negotiation. A \textbf{value network} is neural network that takes in a game state and outputs a scalar representing the quality of the state.

\section{Related Work}

Prior work in the field of negotiation has typically been centered on leveraging human data to learn negotiation strategies. These methods involve collecting human-human dialogues for negotiation exercises such as Craigslist bargaining \cite{he-etal-2018-decoupling} or Deal-or-no-Deal \cite{lewis-etal-2017-deal}. This data can then be used to perform supervised learning or offline reinforcement learning on a negotiation model \cite{verma-etal-2022-chai, zhan-etal-2024-lets}. More recent work has focused on examining and enhancing the negotiation capabilities of LLMs \cite{bianchi-etal-2022-language, schneider2023negotiating, fu2023improving, xia2024measuring}. These methods use prompting to create negotiation agents and rely on the zero-shot/few-shot capabilities of LLMs to negotiate.

Data driven methods for negotiation are able to learn human-like negotiation strategies as they directly leverage human data. However they are often overly tailored to one particular domain and have a difficult time generalizing to other scenarios. Furthermore, data driven strategies lack theoretical guarantees such as convergence to a Nash equilibrium is which is a desirable attribute for any negotiation strategy.

Methods grounded in game theory are able to provide the theoretical guarantees that data driven methods lack. As a result they are much more controllable and adapt better to different domains as no additional data collection is needed for training. However, training with no human involvement often results in strategies which are incompatible with human play \cite{cicero}. This has limited work in the area primarily to two-player zero-sum games such as chess where human-compatibility is not needed to ensure robust play \cite{alphazero-silver}. These methods are designed to ensure convergence to a Nash equilibrium, which does not necessarily result in a human-compatible strategy (Section \ref{sec:theoretical_analysis}).  The little work that has attempted to apply game theoretic methods to the negotiation domain tends to ignore the dialogue aspect of negotiation, considering it to be "cheap talk`` \cite{li2023combining}. While the strategic aspect of negotiation can be modeled independently of dialogue, dialogue style has been shown to have a measurable effect on negotiation outcomes \cite{noh2024llms}. Our method is designed to provide theoretical guarantees similar to game theoretic methods while maintaining human-compatibility of data-driven methods by leveraging the reasoning capabilities of LLMs.

\begin{figure*}[h]
    \centering
    \includegraphics[width=\textwidth]{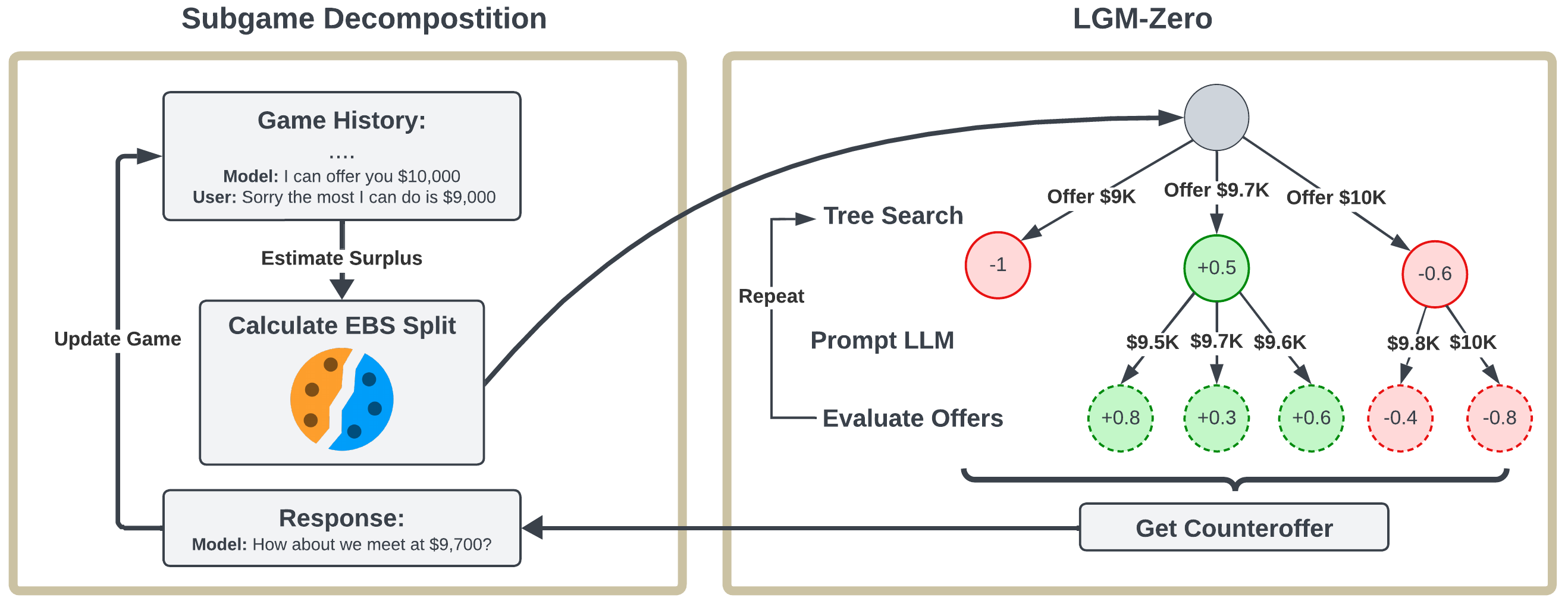}
    \caption{Outline of our FDHC negotiation framework. Our method consists of decomposing the extensive form Nash bargaining game into a series of depth-limited subgames. At each subgame we calculate the EBS and apply a human-like strategy which targets this outcome using a MCTS guided by a LLM and value network.}
    \label{fig:framewrok}
\end{figure*}

\section{Method}
\label{sec:method}

In this section, we describe the FDHC framework which prioritizes egalitarian outcomes. We also describe LGM-Zero, which uses a value model trained with self-play and language model as a policy network. Finally, we outline how we implement our setup for single-issue distributive bargaining.

\subsection{FDHC Negotiation Framework}
FDHC is designed to work within the context of the Nash bargaining game. Specifically, it is designed for an extensive form Nash bargaining game with imperfect information. In this game, players repeatedly request some portion of a surplus, if the sum of their requests at the end of the game is less than or equal to the total surplus then they both receive what they requested, if not they receive a disagreement payoff $d$. FDHC works by decomposing this game in to a series of depth limited subgames. These subgames are identical to the original game, except they may be rooted at any game history and only extend for a limited number of actions in the future. 

Before proceeding to our subgame, we make a guess at the size of the resource pool to be split and our opponent's utility function over these resources. The guess is made based on the history of the game and any initial information we are provided before the game has begun. The specifics of how we do this are domain-dependent and for many games some of the information may be given. For example, in the game Deal-or-no-Deal we know the size of our resource pool but do not know our opponents preferences over the pool. Conversely, in distributive bargaining games we know our opponents preferences but do not know the size of the resource pool.

After making our guess, we root our subgame at the corresponding belief state. This subgame is treated as a perfect-information game and the EBS is calculated as
\begin{equation}
    E(S, d) = \argmaxA_{x \in I(S, d)} (\min_{i \in N}(x_i-d_i)) \nonumber
\end{equation}
where $S$ denotes the bargaining set, $I(S, d)$ is some individually rational payoff set, and $d_i$, $x_i$ are the disagreement payoff and payoff for player $i$, respectively.

Our model then applies a strategy which targets this solution using LGM-Zero, described in the next section. We make moves according to this strategy until the subgame concludes. This can be as short as one move or as long as the entire game depending on our choice of subgame length. We then update our guess for the resource pool and utility function based on our opponents moves and transition to the next subgame. This process is repeated until the game concludes.

\subsection{LGM-Zero}
\label{sec:search_method}

Now we describe LGM-Zero, which uses a MCTS guided by a LLM and value network to perform negotiation actions. Under our setup the only model we train is our value network which is trained via self-play. We first describe how our method behaves during inference time and then describe the process we use to train our value model.

\subsubsection{Inference}

Given the action history of a negotiation our algorithm searches for the best response by repeatedly performing selection, expansion, and backpropagation. We describe these stages next.

\textbf{Selection} During this stage we traverse the game tree by selecting the action, $a$, with the highest upper confidence bound for its Q-value \cite{alphazero-silver}, calculated as
\begin{equation}
    U(s, a) = Q(s, a) + c_p * \frac{\sqrt{\sum_b N(s,b)}}{1 + N(s, a)} \nonumber
\end{equation}
where $s$ is the current game state, $c_p$ is a hyperparameter which controls the degree of exploration, and $N(s,a)$ denotes the number of times we have taken the action previously. The selection process is repeated until we reach a leaf node, which is a defined as a state whose children have not been explored yet \cite{wiechowski2021MonteCT}.

\textbf{Expansion} In the expansion phase we feed a LLM a prompt to suggest five good actions given the current game state. The prompt used to generate actions must be engineered specifically for the negotiation scenario the search is being applied to. We treat all the actions as having equal probability under the model and all other actions at the current state to have a probability of zero. If one of the actions results in a terminal state its value is set to the reward returned by the state, otherwise it's set to the output of our value model. These values are propagated back up the tree according to the next step.

\textbf{Backpropagation} After expansion is concluded we update each node along the search path by incrementing $N(s,a)$ by one for each action taken during the search. We also update the Q-values along the search path as
\begin{equation}
    Q(s,a) \leftarrow Q(s,a) + \frac{v(s)}{N(s,a)} \nonumber
\end{equation}
where $v(s)$ is the value of the state we evaluated, given either by our value model or the actual reward value depending on if the state is terminal.

We repeat this search for $n$ iterations then make a move based on which child of the current state has the highest Q-value.

\subsubsection{Training}

Our value model is trained using a method similar to fictitious self-play \cite{Heinrich2015FictitiousSI}. Fictitious self-play is an iterative method for computing an approximate Nash equilibrium. This is done by performing self-play with a mixed strategy that chooses between playing a best response to our opponent's strategy and the average strategy for the current player. The fictitious self-play set up traditionally learns the best response strategy with a deep Q-network \cite{mnih2013playingatarideepreinforcement} and the average strategy via supervised learning.

Prior work has suggested augmenting the best response step with a MCTS \cite{Zhang2019fictitious-mcts}. We adopt this approach and use the same search process we use during inference time for the best response strategy. To apply an average strategy we once again leverage a LLM but instead of ranking the offers with our value network, we simply have it suggest one move.

The training data for the value model consists of game states and outcomes for the depth limited subgames described in the previous section. The reward for each subgame, from the perspective of player one, is given by 
\begin{equation}
    v(s) =
    \begin{cases}
         \min_{i \in N}(x_i-d_i) & \text{if}\ x_1 \geq E(S,d) \\
         -\min_{i \in N}(x_i-d_i) & \text{if}\ x_1 < E(S,d)
    \end{cases} \nonumber
\end{equation}
This reward says that if the payoff for player one is greater than or equal to the EBS of the subgame, then the reward is simply the EBS score for the game state. If the player's payoff is less than the EBS then they receive the negative EBS score for the game state. Our reward design reflects the fact that humans care about a combination of fairness and their own utility. Therefore our model will target the EBS solution (which has the max reward value) while also preferring outcomes that result in better payouts for itself. 

\subsection{Implementation}
We implement our proposed method for a single-issue distributive bargaining exercise. This exercise involves two parties negotiating over the price of a used car and is used in graduate-level business classes (see Appendix \ref{sec:appendix} for the scenario). The buyer and seller are both given private reservation prices which they cannot go beyond during the negotiation. In our scenario the seller cannot go below a price of \$12,500 and the buyer cannot go above \$13,500. The difference between the reservation prices is the surplus for the game. Our model is trained to act as the seller in this scenario. We assume that our opponents are risk neutral and have a disagreement payoff of \$100. This disagreement payoff is chosen based on experimental results which show that inefficient outcomes, such as disagreements, are common in negotiation \cite{FELTOVICH2011554, Ellingsen2004} suggesting that many humans may prefer to not reach a deal instead of agreeing to a outcome which gives little payoff.

Our final design uses a modular framework where the negotiation acts are extracted from user responses using GPT-4. Our schema uses four acts: no\_counteroffer, counteroffer, accept, reject. These acts are translated into our game state which consists of the offer history for the game (ex. [1500, 1100, 1450, 1200,\ldots]). If the user rejects an offer or gives no counter offer then we assume that they are maintaining their previous offer. If they accept the offer then we assume that their offer is equal to FDHC’s offer. Then a counteroffer is generated using FDHC and LGM-Zero. This offer is realized in natural language by prompting GPT-3.5 to generate a response incorporating the action.

We use GPT-3.5 as our LLM policy network and a transformer with 50 encoder layers and 50 decoder layers as our value network. Our initial subgame is rooted at the belief state for the surplus corresponding to the price range given in the initial description of the car. After the subgame concludes, our new guess for the surplus is equal to the difference between our current offer and the maximum between our opponent's offer and our reservation price. At the final turn of the negotiation we offer \$100 (our disagreement payoff) above our reservation price or accept our opponents offer if it's above this price. Additional implementation details can be found in Appendix \ref{sec:implementation}.

\section{Theoretical Analysis}
\label{sec:theoretical_analysis}
In this section we analyse the theoretical properties of the FDHC framework. Our analysis assumes that, when needed, we can manipulate our LLM policy so that one of the offers it outputs is equivalent to the EBS.

We can ensure that our framework will result in a Nash equilibrium under fairly mild assumptions. We need to assume that the bargaining game is conducted during a finite number of time steps and that the number of steps is known to both players. This gives us the result in Theorem \ref{thm:fdhc_ne}.

\begin{theorem}
\label{thm:fdhc_ne}
Let $t_n$ denote the FDHC's final turn in the negotiation, let $\alpha$ denote the outcome proposed at $t_{n-1}$, and let EBS($x$) denote the EBS value for some outcome $x$. Setting FDHC's estimate of $S = \argmaxA(\text{EBS}(\alpha), \text{EBS}(d))$ at $t_n$ will result in a Nash equilibrium outcome.
\end{theorem}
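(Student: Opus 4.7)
The plan is to analyze FDHC's terminal-move rule and verify via backward induction that the resulting outcome cannot be improved upon by any unilateral deviation.

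First, I would observe that $\text{EBS}(d) = \min_{i \in N}(d_i - d_i) = 0$, so the argmax collapses to a dichotomy: FDHC sets $S$ to correspond to $\alpha$ whenever $\alpha$ is individually rational for every player (i.e., $\text{EBS}(\alpha) \geq 0$, equivalent to $\alpha_i \geq d_i$ for all $i$), and to correspond to $d$ otherwise. Combined with the standing assumption that the LLM policy can be induced so that one of its candidate offers equals the EBS of the estimated $S$, FDHC's action at $t_n$ is therefore to accept (or re-propose) $\alpha$ in the first case, and to play a move that forces the disagreement outcome $d$ in the second case (mirroring the implementation's final-turn rule of offering just \$100 above the reservation price, which the opponent will reject).

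Second, I would verify Nash optimality by case analysis on the dichotomy above. In the case $\alpha_i \geq d_i$ for both $i$, the realized outcome is $\alpha$; FDHC's only alternative at $t_n$ would yield $d_1 \leq \alpha_1$, so FDHC has no profitable deviation, and any opponent deviation at $t_{n-1}$ to some $\alpha'$ is processed by the same rule and produces an outcome in $\{\alpha', d\}$ that does not strictly improve on $\alpha_2$ without violating feasibility. In the case some $\alpha_i < d_i$, the realized outcome is $d$; the player with $\alpha_i < d_i$ strictly prefers $d$ to $\alpha$, so no unilateral move that replaces $d$ with $\alpha$ is profitable, while FDHC cannot gain by switching to a move the opponent would accept because that simply moves the outcome to $\alpha$ which is at most as good as $d$ for FDHC in the subcase where FDHC is the disadvantaged party.

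Third, I would appeal to the one-deviation principle for finite-horizon extensive-form games, available because the game length is finite and common knowledge by hypothesis: local best-responses at $t_n$ and $t_{n-1}$ extend, by backward induction, into a complete strategy profile at which no player gains from a one-shot deviation, i.e., into a Nash equilibrium whose realized outcome is exactly the one described by the theorem. The main obstacle I expect is the asymmetric subcase where $\alpha_1 > d_1$ but $\alpha_2 < d_2$: FDHC's rule forces $d$ even though $\alpha$ strictly dominates $d$ for FDHC alone, so the argument must establish that such an $\alpha$ cannot arise on an equilibrium path (the opponent would never propose an outcome that is irrational for themselves), making FDHC's off-path behavior vacuously consistent with equilibrium. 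Formalizing this off-path reasoning cleanly, together with the monotonicity assumption needed to rule out profitable opponent deviations in the rational case, is the technically delicate step of the proof.
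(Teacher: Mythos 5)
Your proposal takes a genuinely different route from the paper's, and the route introduces two concrete errors. The paper's proof is a reduction: Lemma \ref{nbg_ne} shows that \emph{any} outcome of the Nash bargaining game in which the demands sum to the full surplus is a Nash equilibrium (a player demanding less gets less; a player demanding more kills the deal and receives only $d_i$). The theorem then only requires showing that the terminal rule $S = \argmaxA(\text{EBS}(\alpha), \text{EBS}(d))$ guarantees such a deal whenever one is feasible: if $\text{EBS}(\alpha) \geq \text{EBS}(d)$ the estimate of $S$ collapses to the single point $\alpha$ and FDHC's only available move is to accept it; if $\text{EBS}(\alpha) < \text{EBS}(d)$ FDHC proposes an outcome in which it takes no surplus, which gives the opponent a positive payoff and is therefore accepted. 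No backward induction, one-deviation principle, or off-path analysis is needed.

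The two errors. First, in your second case you assert that FDHC ``forces the disagreement outcome $d$'' and that the opponent ``will reject'' the offer of \$100 above the reservation price. The opposite is true: that offer concedes essentially the entire surplus to the opponent, who accepts it, so the realized outcome is a full-surplus deal, not disagreement. Your subsequent argument for why $d$ would be an equilibrium outcome is therefore analyzing the wrong outcome, and the asymmetric subcase you flag as the main obstacle never arises. Second, your claim that an opponent deviation at $t_{n-1}$ to some $\alpha'$ ``does not strictly improve on $\alpha_2$'' is false: the opponent can propose an $\alpha'$ that takes more surplus while remaining individually rational for FDHC, and FDHC's terminal rule will accept it. The strategy profile is thus not a Nash equilibrium of the extensive-form game, and the theorem does not claim it is --- it claims only that the realized \emph{outcome} (the final pair of demands) constitutes a Nash equilibrium of the underlying demand game, which is exactly what Lemma \ref{nbg_ne} delivers. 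The ``technically delicate'' off-path step you anticipate at the end is an artifact of trying to prove this stronger statement, which does not hold.
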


The proof for this result is straightforward and is presented in Appendix \ref{sec:proofs}. What this theorem says is that we can adjust our surplus estimate so that at its final turn, FDHC will either concede all of the surplus to its opponent(s) or accept the opponents' offer, so long as the offer is larger than its disagreement payoff. This strategy will ensure that the negotiation ends in a deal that splits the entire surplus if one is feasible. Since any deal which splits the whole surplus results in a Nash equilibrium (Appendix \ref{sec:proofs}), our method will give a Nash equilibrium outcome.

This result also demonstrates that convergence to a Nash equilibrium alone is not enough to ensure a robust negotiation agent. However, this does give our agent a baseline level of quality as it means that we are guaranteed to reach a deal if one is feasible. This is in contrast to data-driven methods which provide no such guarantees and have been empirically shown to give inefficient negotiation outcomes \cite{bianchi-etal-2022-language}. We consider a negotiation agent to be robust if it can perform well against a variety of strategies.

Under stronger assumptions, we can guarantee that FDHC will converge to the EBS in expectation. First, we need to assume that both FDHC and the other negotiators have a method to obtain an unbiased estimate of the true surplus value. We also need to make some assumptions about the bargaining outcome induced by our opponents, specifically we make use of the bargaining axioms given in Section \ref{sec:back}.

\begin{theorem}
\label{thm:fdhc_ebs}
Let $F(S,d)$ denote the bargaining outcome targeted by FDHC's opponents. If $F(S,d)$ satisfies the axioms of symmetry, weak Pareto optimality, and strong monotonicity then the expected outcome of the Nash bargaining game will be $E(s,d)$.

\end{theorem}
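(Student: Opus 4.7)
The plan is to reduce the theorem to Kalai's (1977) characterization of the Egalitarian Bargaining Solution, which states that the EBS is the unique bargaining solution satisfying symmetry, weak Pareto optimality, and strong monotonicity. Since these are exactly the three axioms assumed to be satisfied by $F(S,d)$, this characterization immediately forces $F(S,d) = E(S,d)$. Hence FDHC's opponents target the same solution concept as FDHC itself, and the rest of the argument consists of showing that when both sides target the EBS and both have unbiased estimates of the surplus, the expected realized outcome is $E(S,d)$.

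First I would formally invoke Kalai's characterization theorem to conclude $F \equiv E$, citing \cite{kalai1977-ebs} which is already referenced in the introduction. This reduces the setting to: FDHC targets $E(\hat{S}_1, d)$ using its unbiased estimate $\hat{S}_1$, and each opponent targets $E(\hat{S}_2, d)$ using its own unbiased estimate $\hat{S}_2$. Because all parties agree on the solution concept, the negotiation dynamics described in Section \ref{sec:method} drive play toward the EBS computed from whichever estimate ultimately governs the subgame being resolved; in particular, Theorem \ref{thm:fdhc_ne} guarantees that a deal is struck whenever one is feasible, so the negotiation terminates with a split determined by the targeted EBS values rather than the disagreement payoff.

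Next I would exploit the structural form of the EBS in a Nash bargaining game. Writing $E(S,d)$ out via the definition $E(S,d) = \arg\max_{x \in I(S,d)} \min_i (x_i - d_i)$, and using weak Pareto optimality on the feasible set (which for surplus division has a linear Pareto frontier $\sum_i x_i = |S|$), each coordinate of $E(S,d)$ is an affine function of the surplus magnitude. Therefore $E(\hat{S}, d) = d + g(\hat{S})$ where $g$ is linear in the scalar surplus. By linearity of expectation and the unbiasedness of $\hat{S}$, $\mathbb{E}[E(\hat{S}, d)] = E(S, d)$, which gives the conclusion.

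The main obstacle I anticipate is step two: formalizing what the ``outcome of the game'' is when the two sides hold different (though unbiased) estimates $\hat{S}_1$ and $\hat{S}_2$, since the negotiation is a dynamic extensive-form process rather than a one-shot offer. I would handle this by appealing to the subgame decomposition in Section \ref{sec:method}: estimates are refined turn by turn based on observed offers, and because both players are targeting the same axiomatic solution, their proposals coincide in the limit as estimates concentrate around the true surplus. Combined with the affine structure of the EBS on the surplus, taking expectations over the (unbiased) belief distributions yields $E(S,d)$ as the expected terminal outcome. A secondary subtlety worth noting in the write-up is that Kalai's uniqueness result is usually stated for two-player games; if the theorem is intended to cover the multi-party extension mentioned in the introduction, I would cite the appropriate $n$-player generalization of Kalai's result to justify applying it in that setting.
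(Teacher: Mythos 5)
Your proposal takes essentially the same route as the paper: the paper's key Lemma is exactly the characterization you invoke (symmetry, weak Pareto optimality, and strong monotonicity uniquely pin down $E(S,d)$), which it proves directly following Conley's construction rather than citing Kalai, and it then concludes from the unbiasedness of each player's surplus estimate that the expected outcome is $E(S,d)$. Your extra care about the affine dependence of the EBS on the surplus estimate (to justify passing the expectation inside) is in fact more detail than the paper supplies, which simply asserts $\mathbb{E}(E(s,d)) = E(s,d)$.
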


Our proof of this theorem follows \citealp{CONLEY1991_ebs_proof} and is presented in Appendix \ref{sec:proofs}. 


\section{Experiments}
We test the effectiveness of our method using both automatic and human evaluations. Our results show that our method is able to generate fairer outcomes than existing negotiation baselines. Our human evaluation also shows that our method improves perceived negotiation quality while maintaining the same level of human-like negotiation as GPT-4.

\subsection{Baselines}
We test our method against six negotiation baselines described below.

\textbf{Supervised Learning (SL)} We use the SL agent described in \citealp{he-etal-2018-decoupling} as our first baseline. This method uses the Craigslist bargaining dataset \cite{he-etal-2018-decoupling} to train a negotiation agent via supervised learning. More details on this baseline can be found in Appendix \ref{sec:implementation}.


\textbf{Offline RL} Our second baseline is based on the CHAI method given in \citealp{verma-etal-2022-chai}. This method uses the Craigslist bargaining dataset to train a negotiation agent with offline Q-learning instead of SL. Implementation details for this method can be found in Appendix \ref{sec:implementation}.

\textbf{GPT-3.5 and GPT-4} We setup GPT-3.5 and GPT-4 for negotiation by prompting them with a summarized version of the scenario in Appendix \ref{sec:appendix}. We find that giving them the full scenario results in oversharing information. We also explicitly tell the model not to reveal its reservation price.

\textbf{GPT-4 Self-Play} We include another baseline using the method described in \citealp{fu2023improving}. This method uses self-play to generate a prompt to improve the negotiation performance of GPT-4. Additional details can be found in Appendix \ref{sec:implementation}.


\textbf{Vicuna-13b} Our final baseline consists of a 13b parameter Vicuna model fine-tuned using synthetic data generated from GPT-4. We generate 108 negotiation transcripts for various negotiation scenarios and use them to train the Vicuna model with the goal of distilling a high quality strategy.

\begin{figure*}[h]
    \centering
    \includegraphics[width=\textwidth]{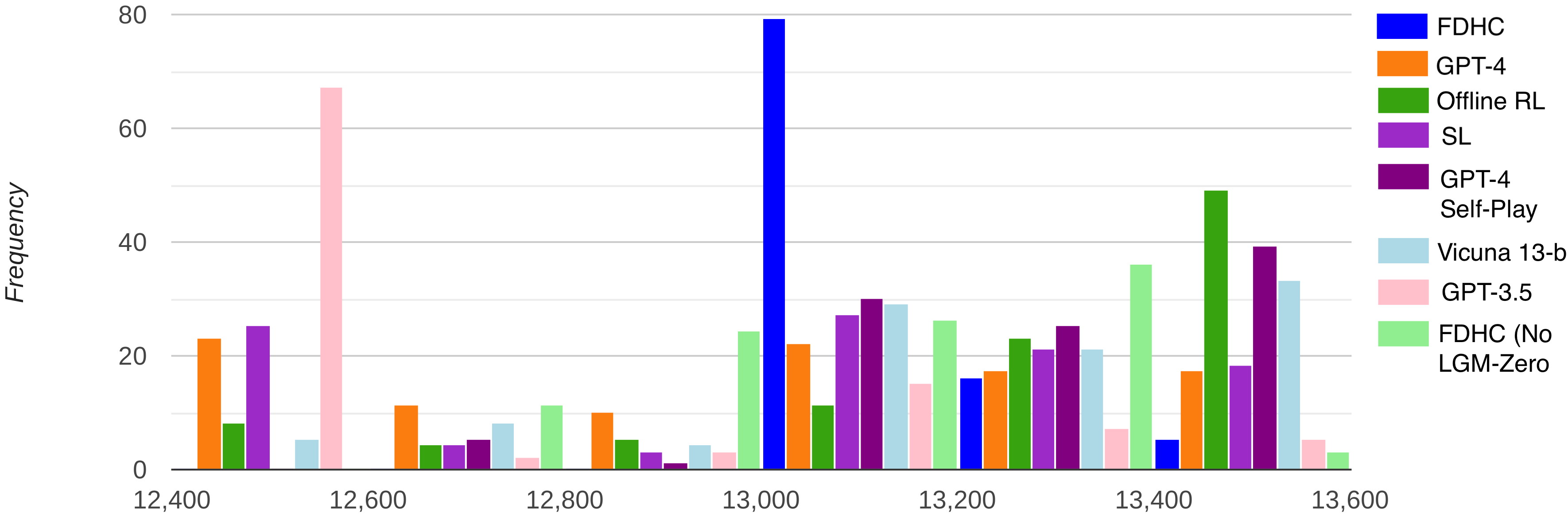}
    \caption{ Binned deal price frequencies of 100 negotiations between our baselines and a GPT-4 buyer. Our goal is to achieve deal prices that minimize the difference in payoff between the buyer and seller. In our scenario this amount is minimized at a deal price of \$13,000.}
    \label{fig:auto_eval}
\end{figure*}

\begin{table*}[h]
\centering
\begin{tabular}{lccc}
\hline
\textbf{Model (Seller)} & \textbf{Average Deal Price} & \textbf{Average Fairness$\uparrow$} & \textbf{Median Fairness$\uparrow$}\\
\hline
GPT-3.5 & \$12,644 (357) & -0.88 (0.49) & -1.0\\
Offline RL & \$13,224 (308) & -0.68 (0.34) & -0.8\\
SL & \$12,978 (368) & -0.59 (0.44) & -0.6\\ 
GPT-4 & \$12,968 (346) & -0.57 (0.39) &	-0.5\\
GPT-4 Self-Play & \$13,242 (240) & -0.54 (0.41) & -0.5\\
Vicuna-13b & \$13,156 (293)  & -0.53 (0.40) &	-0.5\\
FDHC (No LGM-Zero) & \$13,042 (211) & -0.36 (0.23) & -0.4\\
FDHC & \$13,062 (128) & \textbf{-0.12 (0.26)}$^*$ & \textbf{0.0} \\
\hline
\end{tabular}
\caption{\label{tab:auto_eval}
Results of our simulated negotiation evaluation. We consider optimal outcomes to be those which achieve the highest values for fairness, which we define as outcomes which minimize the payoff difference between our buyer and seller. We also report the average deal price between the buyer and seller, standard deviations are shown in parentheses. Statistically significant improvements (independent two-sample t-test, $p < 0.05$) over the baselines are marked with $^*$.
}
\end{table*}

\subsection{Automatic Evaluation}
\label{sec:auto_eval}

For our automatic evaluation we conduct 100 simulated negotiations between our baselines and a GPT-4 buyer. We consider optimal outcomes in these negotiations to be ones which achieve the highest values for fairness, as we consider this outcome to be the most human-compatible. For our evaluations, fairness is defined as the difference in payoff between the buyer and seller. 

The results of our evaluation are shown in Table \ref{tab:auto_eval} and Figure \ref{fig:auto_eval}. Our results show that FDHC is able to achieve much higher values of fairness compared to our baselines. More than 50\% of the deals reached in our negotiations achieve a payoff difference of zero. 

We find that our LLM-based baselines generally perform better than the methods trained using domain-specific data (SL and offline RL). This may be due to the fact that there is a slight mismatch between the negotiation scenarios described in the Craigslist bargaining dataset and the one in our experiment (dataset details can be found in Appendix \ref{sec:implementation}). The scenario in our experiment gives negotiators explicit reservation prices which we use to calculate utilities. However, in the Craigslist bargaining scenarios no reservation prices are given and instead must be inferred. \citealp{he-etal-2018-decoupling} provide a method for inferring these prices which we use here, however the lack of explicit reservation prices may still be harming negotiation performance. We believe this highlights an inherent weakness of data driven bargaining methods as new data must be collected in order to ensure high quality performance in new negotiation domains.

Our LLM-based baselines all perform similarly in terms of fairness, with no statistically significant differences between the outcomes. Given the lack of differences as well as the fact that the average GPT-4 deal price is the most egalitarian out of all these models, we choose to use the GPT-4 baseline for comparison in our human evaluation.

\begin{figure*}[h]
    \centering
    \includegraphics[width=\textwidth]{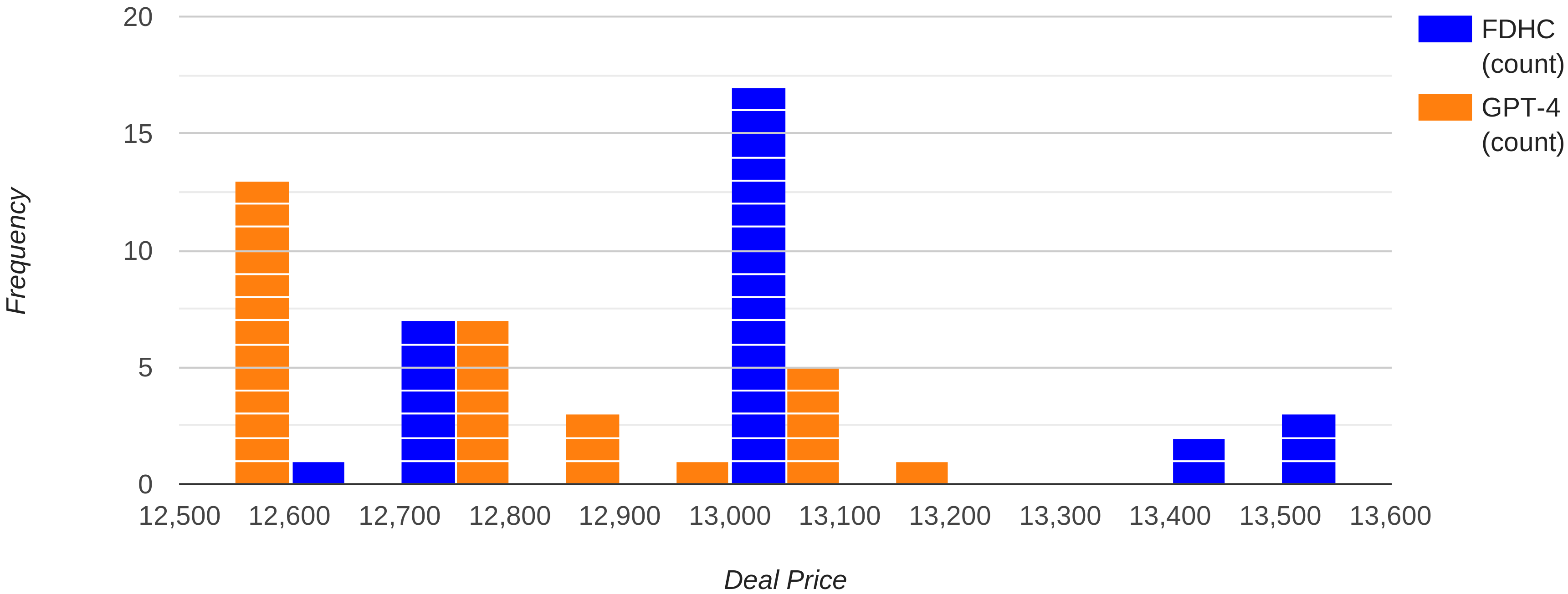}
    \caption{ Binned deal price frequencies of 30 negotiations between our baselines and a human buyer. Our goal is to achieve deal prices that minimize the difference in payoff between the buyer and seller. In our scenario this amount is minimized at a deal price of \$13,000.}
    \label{fig:human_eval}
\end{figure*}

\begin{table*}[h]
\centering
\begin{tabular}{lcccc}
\hline
\textbf{Model} & \textbf{Average Deal Price} & \textbf{Average Fairness$\uparrow$} & \textbf{Quality$\uparrow$} & \textbf{Human-like$\uparrow$}\\
\hline
GPT-4 & \$12,702 (203) & -0.61 (0.38) &	3.97 (0.96) & \textbf{3.97 (0.96)}\\
FDHC & \$13,032 (238) & \textbf{-0.30 (0.38)}$^*$ & \textbf{4.10 (0.76)} & 3.93 (0.78) \\
\hline
\end{tabular}
\caption{\label{tab:human_eval}
Results of our human evaluation. We record fairness outcomes as well as the perceived negotiation quality and human-likeness of our models. We also report the average deal price between the buyer and seller, standard deviations are shown in parentheses. Statistically significant improvements (independent two-sample t-test, $p < 0.05$) over the baselines are marked with $^*$.
}
\end{table*}

\subsection{Human Evaluation}
\textbf{Setup} For our human evaluation we gathered 30 individuals via in-person recruiting to test our models. Each person was asked to perform a negotiation with both bots giving us 30 dialogues per model. Each user was instructed to chat with our bot until they reached a deal then answer a post-chat survey where they rated "How good of a negotiator is the bot?" on a scale from 1-5 and "How human-like is the bot’s negotiation?" on a scale from 1-5. They could also optionally answer "Do you have any suggestions for improving the bot?" in a text box.

We performed some filtering on our human conversations to avoid low quality dialogues. We removed any conversations where the price detection and price realization modules in our FDHC method failed in order to isolate the actual performance of our framework. This resulted in the removal of all instances where the model agreed to a price below its reservation point. Therefore we also removed instances where GPT-4 agreed to a price below its reservation price so as to not skew the data distribution to favor one condition. We also filtered out dialogues where human participants chose to end the negotiation instead of agreeing to a price which would result in a positive payoff for them, as we consider this irrational behavior or a misunderstanding of the instructions. We applied this filter to both conditions but only the GPT-4 condition had dialogues removed. 

\textbf{Results} The results of our human evaluation are shown in Table \ref{tab:human_eval} and Figure \ref{fig:human_eval}. Our results show that FDHC once again achieves significantly higher fairness scores. We can also see that our framework is able to maintain a similar average deal price to our automatic evaluation, which suggests that it is able to achieve a similar distribution of outcomes against a variety of strategies. Our results also indicate that our model is able to improve negotiation quality over the GPT-4 baseline while maintaining a similar level of human-likeness, despite the fact that we use GPT-3.5 as our base model. A sample conversation from our human evaluation with FDHC can be seen in Table \ref{tab:convo} and a sample conversation of the GPT-4 model can be found in Table \ref{tab:convo-gpt4}. Full conversations are in Appendix \ref{sec:appendix}, including the full versions of the conversations in Tables \ref{tab:convo} and \ref{tab:convo-gpt4}.

\section{Conclusion and Future Work}
We presented a novel framework for building human-compatible negotiation agents called FDHC. Our framework uses fairness as a notion of optimality along with a novel RL+search method called LGM-Zero to learn a human-like negotiation strategy. Our automatic and human evaluations show that our method is able to achieve more egalitarian outcomes compared to several baselines. Our human evaluation also shows that our method is able to improve negotiation quality over the GPT-4 baseline while being similarly human-like despite using GPT-3.5 as its base model.

An interesting direction of future work is to explore the use of alternate solutions to the Nash bargaining game. These include solutions such as the Nash bargaining solution \cite{Nash} or the Kalai–Smorodinsky bargaining solution \cite{ks_solution}. While our framework is designed to maximize convergence to the EBS, we can easily adjust it to target any feasible outcome.

\begin{table}[htb]
\centering
\begin{tabular}{p{2.8in}}
\hline \textbf{Sample Negotiation (FDHC) } \\ \hline
\textbf{Model:} Hello, I am glad to offer you my 2004 Honda Accord\ldots I would like to offer the car to you for a price of \$14,750\ldots
 \\
\textbf{User:} Hi! I'm very interested in your car\ldots I am wondering if \$12,000 would be a reasonable price for your car? \\
\textbf{Model} \ldots I don't think \$12,000 is a fair price \dots I think \$13,500 would be a more reasonable price\ldots  \\
\textbf{User:} \dots This is definitely a more fair price, but the other option I am looking for is at \$12,750\ldots would we be able to reach a middle ground? \\
\textbf{Model} Thank you for considering my offer\ldots but \$12,750 is still lower than I am willing to go. \ldots I can offer you the Honda Accord for \$13,250\ldots \\
\textbf{User:} Would \$13,000 be reasonable for you?  \\
\textbf{Model:} \ldots I think \$13,000 is a reasonable price for the Honda Accord. I accept your offer, and we have a deal!\\
\hline
\end{tabular}
\caption{A negotiation snippet from our human evaluation between a human evaluator and FDHC.}
\label{tab:convo}
\end{table}

\begin{table}[htb]
\centering
\begin{tabular}{p{2.8in}}
\hline \textbf{Sample Negotiation (GPT-4) } \\ \hline
\textbf{Model:} Hello there!\ldots I've got a 2004 Honda Accord for sale\ldots Considering its popular demand and excellent condition, I'm asking for \$13,000. It's a fair price for the features and reliability you're getting\ldots 
 \\
\textbf{User:} Can you do 12,000? I would definitely like to see it.  \\
\textbf{Model} I'm glad to hear that you're interested!\ldots I find \$13,000 to be a reasonable price. However, to make it a little easier for you, I can bring it down to \$12,800\ldots \\
\textbf{User:} I would definitely want to test drive this week. But i think the most i can do is 12,500. \\
\textbf{Model} It's great to hear that you'd like to test drive the car\ldots As for the price, your offer of \$12,500\ldots I do believe it's worth that value\ldots I can\ldots have the car ready for you this week.\\
\hline
\end{tabular}
\caption{A negotiation snippet from our human evaluation between a human evaluator and GPT-4. }
\label{tab:convo-gpt4}
\end{table}

\section{Limitations}

One limitation of our method is that we only have theoretical guarantees of convergence to the EBS solution under some fairly strong assumptions. The only guarantee we can provide under mild assumptions is convergence to a Nash equilibrium, however this does not necessary imply that our negotiation agent is robust. Although empirical evidence has shown the EBS to be a good predictor of negotiation outcomes, it is possible for someone to apply a strategy that exploits our model's strategy resulting in unequal payouts.

Another limitation with our method is the fact that the use of a LLM during our search algorithm makes the search process quite slow compared to methods which use smaller policy models. In order to have our model respond in a reasonable amount of time we had to limit the model to 10 searches during our human evaluation. This means that we heavily rely on our value model to make accurate evaluations of potential game states. It is worth exploring whether smaller models can extract similar quality offers compared to LLMs which would allow us to perform more searches. This slow response time is also the reason why we chose to use GPT-3.5 as our dialogue module. This likely resulted in lower dialogue quality and may have affected the quality and human-likeness scores for our agent. When deploying our model in the future it may be worth using a higher quality language model as our dialogue module to improve response quality at the expense of response time.

Given that our method relies heavily on game theory, it is difficult for our bot to incorporate more social aspects of negotiation. In order to effectively incorporate this into our bot we would likely need to include human data or utilize other models of human behavior beyond game theory.

\section{Ethical Concerns}
While our method is designed to stress the importance of fairness and cooperation in negotiation, our framework can be adjusted to have our model target a variety of different negotiation goals and tactics. This includes tactics which attempt to bully and exploit people. Such "hardball tactics`` are ethically questionable and we do not condone the use our method in this way in any real-world negotiation scenarios. 

However, negotiation research has shown that hardball tactics ultimately result in worse negotiation outcomes for those using them as opposed to adopting a cooperative approach to negotiation \cite{lewicki-essentials}. Therefore we believe that the best use of our method for all users will be to use it in its intended way of prioritizing fairness.

\bibliography{anthology, custom}

\begin{thebibliography}{34}
\expandafter\ifx\csname natexlab\endcsname\relax\def\natexlab#1{#1}\fi

\bibitem[{Bakhtin et~al.(2022)Bakhtin, Brown, Dinan, Farina, Flaherty, Fried, Goff, Gray, Hu, Jacob, Komeili, Konath, Kwon, Lerer, Lewis, Miller, Mitts, Renduchintala, Roller, Rowe, Shi, Spisak, Wei, Wu, Zhang, and Zijlstra}]{cicero}
Anton Bakhtin, Noam Brown, Emily Dinan, Gabriele Farina, Colin Flaherty, Daniel Fried, Andrew Goff, Jonathan Gray, Hengyuan Hu, Athul~Paul Jacob, Mojtaba Komeili, Karthik Konath, Minae Kwon, Adam Lerer, Mike Lewis, Alexander~H. Miller, Sasha Mitts, Adithya Renduchintala, Stephen Roller, Dirk Rowe, Weiyan Shi, Joe Spisak, Alexander Wei, David Wu, Hugh Zhang, and Markus Zijlstra. 2022.
\newblock \href {https://doi.org/10.1126/science.ade9097} {Human-level play in the game of diplomacy by combining language models with strategic reasoning}.
\newblock \emph{Science}, 378(6624):1067--1074.

\bibitem[{Bianchi et~al.(2022)Bianchi, Nozza, and Hovy}]{bianchi-etal-2022-language}
Federico Bianchi, Debora Nozza, and Dirk Hovy. 2022.
\newblock \href {https://doi.org/10.18653/v1/2022.nlppower-1.9} {Language invariant properties in natural language processing}.
\newblock In \emph{Proceedings of NLP Power! The First Workshop on Efficient Benchmarking in NLP}, pages 84--92, Dublin, Ireland. Association for Computational Linguistics.

\bibitem[{Bossert and Tan(1995)}]{bossert_tan_arbitration}
Walter Bossert and Guofu Tan. 1995.
\newblock \href {http://www.jstor.org/stable/41106109} {An arbitration game and the egalitarian bargaining solution}.
\newblock \emph{Social Choice and Welfare}, 12(1):29--41.

\bibitem[{Brown et~al.(2020)Brown, Bakhtin, Lerer, and Gong}]{brown2020rebel}
Noam Brown, Anton Bakhtin, Adam Lerer, and Qucheng Gong. 2020.
\newblock Combining deep reinforcement learning and search for imperfect-information games.
\newblock \emph{Advances in Neural Information Processing Systems}, 33:17057--17069.

\bibitem[{Chun(1989)}]{Chun1989245}
Youngsub Chun. 1989.
\newblock \href {https://doi.org/https://doi.org/10.1016/0165-4896(89)90055-3} {A noncooperative justification for egalitarian surplus sharing}.
\newblock \emph{Mathematical Social Sciences}, 17(3):245--261.

\bibitem[{Conley and Wilkie(1991)}]{CONLEY1991_ebs_proof}
John~P. Conley and Simon Wilkie. 1991.
\newblock \href {https://doi.org/https://doi.org/10.1016/0165-1765(91)90199-U} {The bargaining problem without convexity: Extending the egalitarian and kalai-smorodinsky solutions}.
\newblock \emph{Economics Letters}, 36(4):365--369.

\bibitem[{DiGiovanni and Zell(2021)}]{DiGiovanni2021-self-play}
Anthony DiGiovanni and Ethan Zell. 2021.
\newblock \href {https://api.semanticscholar.org/CorpusID:235755071} {Survey of self-play in reinforcement learning}.
\newblock \emph{ArXiv}, abs/2107.02850.

\bibitem[{Ellingsen and Johannesson(2004)}]{Ellingsen2004}
Tore Ellingsen and Magnus Johannesson. 2004.
\newblock \href {https://doi.org/https://doi.org/10.1111/j.0347-0520.2004.00373.x} {Is there a hold-up problem?}
\newblock \emph{The Scandinavian Journal of Economics}, 106(3):475--494.

\bibitem[{Feltovich and Swierzbinski(2011)}]{FELTOVICH2011554}
Nick Feltovich and Joe Swierzbinski. 2011.
\newblock \href {https://doi.org/https://doi.org/10.1016/j.euroecorev.2010.07.001} {The role of strategic uncertainty in games: An experimental study of cheap talk and contracts in the nash demand game}.
\newblock \emph{European Economic Review}, 55(4):554--574.

\bibitem[{Fu et~al.(2023)Fu, Peng, Khot, and Lapata}]{fu2023improving}
Yao Fu, Hao Peng, Tushar Khot, and Mirella Lapata. 2023.
\newblock \href {http://arxiv.org/abs/2305.10142} {Improving language model negotiation with self-play and in-context learning from ai feedback}.

\bibitem[{He et~al.(2018)He, Chen, Balakrishnan, and Liang}]{he-etal-2018-decoupling}
He~He, Derek Chen, Anusha Balakrishnan, and Percy Liang. 2018.
\newblock \href {https://doi.org/10.18653/v1/D18-1256} {Decoupling strategy and generation in negotiation dialogues}.
\newblock In \emph{Proceedings of the 2018 Conference on Empirical Methods in Natural Language Processing}, pages 2333--2343, Brussels, Belgium. Association for Computational Linguistics.

\bibitem[{Heinrich et~al.(2015)Heinrich, Lanctot, and Silver}]{Heinrich2015FictitiousSI}
Johannes Heinrich, Marc Lanctot, and David Silver. 2015.
\newblock \href {https://api.semanticscholar.org/CorpusID:13937012} {Fictitious self-play in extensive-form games}.
\newblock In \emph{International Conference on Machine Learning}.

\bibitem[{Kalai(1977)}]{kalai1977-ebs}
Ehud Kalai. 1977.
\newblock \href {http://www.jstor.org/stable/1913954} {Proportional solutions to bargaining situations: Interpersonal utility comparisons}.
\newblock \emph{Econometrica}, 45(7):1623--1630.

\bibitem[{Kalai and Smorodinsky(1975)}]{ks_solution}
Ehud Kalai and Meir Smorodinsky. 1975.
\newblock \href {http://www.jstor.org/stable/1914280} {Other solutions to nash's bargaining problem}.
\newblock \emph{Econometrica}, 43(3):513--518.

\bibitem[{Kroll et~al.(2014)Kroll, Morgenstern, Neumann, Schosser, and Vogt}]{KROLL2014261}
Eike~B. Kroll, Ralf Morgenstern, Thomas Neumann, Stephan Schosser, and Bodo Vogt. 2014.
\newblock \href {https://doi.org/https://doi.org/10.1016/j.jebo.2014.10.009} {Bargaining power does not matter when sharing losses – experimental evidence of equal split in the nash bargaining game}.
\newblock \emph{Journal of Economic Behavior \& Organization}, 108:261--272.

\bibitem[{Kwon et~al.(2023)Kwon, Xie, Bullard, and Sadigh}]{kwon2023reward}
Minae Kwon, Sang~Michael Xie, Kalesha Bullard, and Dorsa Sadigh. 2023.
\newblock \href {https://openreview.net/forum?id=10uNUgI5Kl} {Reward design with language models}.
\newblock In \emph{The Eleventh International Conference on Learning Representations}.

\bibitem[{Lewicki et~al.(2021)Lewicki, Saunders, and Barry}]{lewicki-essentials}
Roy~J. Lewicki, David~M. Saunders, and Bruce Barry. 2021.
\newblock \emph{Essentials of negotiation / Roy J. Lewicki, David M. Saunders, Bruce Barry.}, seventh edition, international student edition. edition.
\newblock McGraw-Hill Education, New York, NY.

\bibitem[{Lewis et~al.(2017)Lewis, Yarats, Dauphin, Parikh, and Batra}]{lewis-etal-2017-deal}
Mike Lewis, Denis Yarats, Yann Dauphin, Devi Parikh, and Dhruv Batra. 2017.
\newblock \href {https://doi.org/10.18653/v1/D17-1259} {Deal or no deal? end-to-end learning of negotiation dialogues}.
\newblock In \emph{Proceedings of the 2017 Conference on Empirical Methods in Natural Language Processing}, pages 2443--2453, Copenhagen, Denmark. Association for Computational Linguistics.

\bibitem[{Li et~al.(2023)Li, Lanctot, McKee, Marris, Gemp, Hennes, Muller, Larson, Bachrach, and Wellman}]{li2023combining}
Zun Li, Marc Lanctot, Kevin~R. McKee, Luke Marris, Ian Gemp, Daniel Hennes, Paul Muller, Kate Larson, Yoram Bachrach, and Michael~P. Wellman. 2023.
\newblock \href {http://arxiv.org/abs/2302.00797} {Combining tree-search, generative models, and nash bargaining concepts in game-theoretic reinforcement learning}.

\bibitem[{Mnih et~al.(2013)Mnih, Kavukcuoglu, Silver, Graves, Antonoglou, Wierstra, and Riedmiller}]{mnih2013playingatarideepreinforcement}
Volodymyr Mnih, Koray Kavukcuoglu, David Silver, Alex Graves, Ioannis Antonoglou, Daan Wierstra, and Martin Riedmiller. 2013.
\newblock \href {http://arxiv.org/abs/1312.5602} {Playing atari with deep reinforcement learning}.

\bibitem[{Nash(1950)}]{Nash}
John~F. Nash. 1950.
\newblock \href {http://www.jstor.org/stable/1907266} {The bargaining problem}.
\newblock \emph{Econometrica}, 18(2):155--162.

\bibitem[{Noh and Chang(2024)}]{noh2024llms}
Sean Noh and Ho-Chun~Herbert Chang. 2024.
\newblock \href {http://arxiv.org/abs/2405.05248} {Llms with personalities in multi-issue negotiation games}.

\bibitem[{OpenAI(2023)}]{openai2023gpt4}
OpenAI. 2023.
\newblock \href {http://arxiv.org/abs/2303.08774} {Gpt-4 technical report}.

\bibitem[{Schneider et~al.(2023)Schneider, Haag, and Kruse}]{schneider2023negotiating}
Johannes Schneider, Steffi Haag, and Leona~Chandra Kruse. 2023.
\newblock \href {http://arxiv.org/abs/2312.03720} {Negotiating with llms: Prompt hacks, skill gaps, and reasoning deficits}.

\bibitem[{Sengupta et~al.(2021)Sengupta, Mohammad, and Nakadai}]{Sengupta-2021}
Ayan Sengupta, Yasser Mohammad, and Shinji Nakadai. 2021.
\newblock An autonomous negotiating agent framework with reinforcement learning based strategies and adaptive strategy switching mechanism.
\newblock In \emph{Proceedings of the 20th International Conference on Autonomous Agents and MultiAgent Systems}, AAMAS '21, page 1163–1172, Richland, SC. International Foundation for Autonomous Agents and Multiagent Systems.

\bibitem[{Silver et~al.(2018)Silver, Hubert, Schrittwieser, Antonoglou, Lai, Guez, Lanctot, Sifre, Kumaran, Graepel, Lillicrap, Simonyan, and Hassabis}]{alphazero-silver}
David Silver, Thomas Hubert, Julian Schrittwieser, Ioannis Antonoglou, Matthew Lai, Arthur Guez, Marc Lanctot, Laurent Sifre, Dharshan Kumaran, Thore Graepel, Timothy Lillicrap, Karen Simonyan, and Demis Hassabis. 2018.
\newblock \href {https://doi.org/10.1126/science.aar6404} {A general reinforcement learning algorithm that masters chess, shogi, and go through self-play}.
\newblock \emph{Science}, 362(6419):1140--1144.

\bibitem[{Thomson and Lensberg(1989)}]{thomson1989axiomatic}
W.~Thomson and T.~Lensberg. 1989.
\newblock \href {https://books.google.com/books?id=z5fPypPfUhMC} {\emph{Axiomatic Theory of Bargaining with a Variable Number of Agents}}.
\newblock Cambridge University Press.

\bibitem[{Tossou et~al.(2020)Tossou, Dimitrakakis, Rzepecki, and Hofmann}]{Tossou2020}
Aristide C.~Y. Tossou, Christos Dimitrakakis, Jaroslaw Rzepecki, and Katja Hofmann. 2020.
\newblock A novel individually rational objective in multi-agent multi-armed bandits: Algorithms and regret bounds.
\newblock In \emph{Proceedings of the 19th International Conference on Autonomous Agents and MultiAgent Systems}, AAMAS '20, page 1395–1403, Richland, SC. International Foundation for Autonomous Agents and Multiagent Systems.

\bibitem[{Verma et~al.(2022)Verma, Fu, Yang, and Levine}]{verma-etal-2022-chai}
Siddharth Verma, Justin Fu, Sherry Yang, and Sergey Levine. 2022.
\newblock \href {https://doi.org/10.18653/v1/2022.naacl-main.332} {{CHAI}: A {CH}atbot {AI} for task-oriented dialogue with offline reinforcement learning}.
\newblock In \emph{Proceedings of the 2022 Conference of the North American Chapter of the Association for Computational Linguistics: Human Language Technologies}, pages 4471--4491, Seattle, United States. Association for Computational Linguistics.

\bibitem[{Vinyals et~al.(2019)Vinyals, Babuschkin, Czarnecki, Mathieu, Dudzik, Chung, Choi, Powell, Ewalds, Georgiev, Oh, Horgan, Kroiss, Danihelka, Huang, Sifre, Cai, Agapiou, Jaderberg, Vezhnevets, Leblond, Pohlen, Dalibard, Budden, Sulsky, Molloy, Paine, Gulcehre, Wang, Pfaff, Wu, Ring, Yogatama, W{\"u}nsch, McKinney, Smith, Schaul, Lillicrap, Kavukcuoglu, Hassabis, Apps, and Silver}]{Vinyals2019starcraft}
Oriol Vinyals, Igor Babuschkin, Wojciech~M. Czarnecki, Micha{\"e}l Mathieu, Andrew Dudzik, Junyoung Chung, David~H. Choi, Richard Powell, Timo Ewalds, Petko Georgiev, Junhyuk Oh, Dan Horgan, Manuel Kroiss, Ivo Danihelka, Aja Huang, L.~Sifre, Trevor Cai, John~P. Agapiou, Max Jaderberg, Alexander~Sasha Vezhnevets, R{\'e}mi Leblond, Tobias Pohlen, Valentin Dalibard, David Budden, Yury Sulsky, James Molloy, Tom~Le Paine, Caglar Gulcehre, Ziyun Wang, Tobias Pfaff, Yuhuai Wu, Roman Ring, Dani Yogatama, Dario W{\"u}nsch, Katrina McKinney, Oliver Smith, Tom Schaul, Timothy~P. Lillicrap, Koray Kavukcuoglu, Demis Hassabis, Chris Apps, and David Silver. 2019.
\newblock \href {https://api.semanticscholar.org/CorpusID:204972004} {Grandmaster level in starcraft ii using multi-agent reinforcement learning}.
\newblock \emph{Nature}, pages 1--5.

\bibitem[{Xia et~al.(2024)Xia, He, Ren, Miao, Zhang, Yang, and Wang}]{xia2024measuring}
Tian Xia, Zhiwei He, Tong Ren, Yibo Miao, Zhuosheng Zhang, Yang Yang, and Rui Wang. 2024.
\newblock \href {http://arxiv.org/abs/2402.15813} {Measuring bargaining abilities of llms: A benchmark and a buyer-enhancement method}.

\bibitem[{Zhan et~al.(2024)Zhan, Wang, Li, Feng, Hua, Sharma, Qu, Semnani~Azad, Zukerman, and Haf}]{zhan-etal-2024-lets}
Haolan Zhan, Yufei Wang, Zhuang Li, Tao Feng, Yuncheng Hua, Suraj Sharma, Lizhen Qu, Zhaleh Semnani~Azad, Ingrid Zukerman, and Reza Haf. 2024.
\newblock \href {https://aclanthology.org/2024.findings-eacl.136} {Let{'}s negotiate! a survey of negotiation dialogue systems}.
\newblock In \emph{Findings of the Association for Computational Linguistics: EACL 2024}, pages 2019--2031, St. Julian{'}s, Malta. Association for Computational Linguistics.

\bibitem[{Zhang et~al.(2019)Zhang, Wei, Li, and Pan}]{Zhang2019fictitious-mcts}
Li~Zhang, Wang Wei, Shijian Li, and Gang Pan. 2019.
\newblock \href {https://api.semanticscholar.org/CorpusID:102351366} {Monte carlo neural fictitious self-play: Approach to approximate nash equilibrium of imperfect-information games}.
\newblock \emph{arXiv: Artificial Intelligence}.

\bibitem[{Świechowski et~al.(2021)Świechowski, Godlewski, Sawicki, and Ma'ndziuk}]{wiechowski2021MonteCT}
Maciej Świechowski, Konrad Godlewski, Bartosz Sawicki, and Jacek Ma'ndziuk. 2021.
\newblock \href {https://api.semanticscholar.org/CorpusID:232147848} {Monte carlo tree search: a review of recent modifications and applications}.
\newblock \emph{Artificial Intelligence Review}, 56:2497--2562.

\end{thebibliography}

\clearpage
\newpage

\appendix

\section{Definitions}
\label{sec:def}

In this section we provide formal definitions for terms and concepts in bargaining theory. These concepts are used for our theoretical analysis of FDHC.

\begin{definition}
    (d-Comprehensivity): Given a point $d \in 	\mathbb{R}^n$ and a set $S \subset \mathbb{R}^n$, $S$ is d-comprehensive if $d \leq x \leq y$ and $y \in S$ then $x \in S$.
\end{definition}

\begin{definition}
    (Comprehensive Hull): The comprehensive hull of a set $S \subset \mathbb{R}^n$ w.r.t a point $d \in 	\mathbb{R}^n$ is the smallest d-comprehensive set containing $S$.
\end{definition}

\begin{definition}
    (Permutation Operator): A permutation operator, $\pi$, is a bijection from $\{1,\ldots,n\}$ to $\{1,\ldots,n\}$. Let $\pi(x)=(x_{\pi(1)},\ldots,x_{\pi(n)})$.
\end{definition}

\begin{definition}
    (Symmetry): A solution, $F(S,d)$, satisfies symmetry if for all permutation operators, $\pi(S) = S$ and $\pi(d) = d$, then $F_i(S,d)=F_j(S,d)$ for all $i,j.$
\end{definition}

\begin{definition}
    (Weak Pareto Optimality): A weak Pareto optimal solution, $F(S,d)$, is any solution such that $F(S,d) \in \{x \in S \mid y > x \implies y \notin S\}$.
\end{definition}

\begin{definition}
    (Strong Monotonicity): Strong monotonicity says that if $S \subset S'$ and $d=d'$ then $F(S,d) \leq F(S',d')$.
\end{definition}

\begin{definition}
    (Translation Invariance): A solution is translation invariant if $\forall x \in \mathbb{R}^n, F(S + \{x\}, d + x) = F(S,d) + x$.
\end{definition}

\section{Proofs}
\label{sec:proofs}

The domain of bargaining problems $(S,d)$ we consider are problems where: $S$ is d-comprehensive, $S$ is compact, and $\exists x \in S \text{ such that } x > d$. To simplify the proofs we also assume that all problems have been translated so that $d=0$. Since the EBS is indeed translation invariant \cite{thomson1989axiomatic} this has assumption has little effect on our analysis.

\begin{lemma}
\label{nbg_ne}
    Let $U$ denote the total amount of surplus and let $x_i$ denote the amount of surplus demanded by player $i$. Any outcome of the Nash bargaining game where $\sum_{i=1}^{n}x_i=U$ is a Nash equilibrium.
\end{lemma}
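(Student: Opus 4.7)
The plan is a direct one-variable case analysis. Fix an outcome $(x_1,\dots,x_n)$ with $\sum_i x_i = U$, fix a player $i$, and compare $i$'s payoff at this outcome against the payoff from an arbitrary unilateral deviation $x_i \mapsto x_i'$ while the other requests $x_j$, $j\neq i$, are held fixed. The key observation is that the total demand after the deviation equals $U+(x_i'-x_i)$, so the subsequent analysis splits into only three subcases according to the sign of $x_i'-x_i$.

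First, if $x_i' > x_i$, the total demand strictly exceeds $U$; by the game rules the players fail to agree and $i$ receives the disagreement payoff $d_i$. Second, if $x_i' < x_i$, the total falls strictly below $U$, every request is honored, and $i$'s new payoff is $x_i' < x_i$. Third, $x_i' = x_i$ is a non-deviation. To conclude that neither of the first two cases is a strict improvement, I use the individual rationality inequality $x_i \ge d_i$: since the players only request non-negative portions of the surplus, and Appendix \ref{sec:proofs} fixes the translation $d=0$, every reachable $x_i$ automatically satisfies $x_i \ge 0 = d_i$. Hence in both nontrivial cases $i$'s new payoff is at most $\max(x_i,d_i) = x_i$, which is the Nash equilibrium condition.

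The only subtle step is this individual rationality observation: without $x_i \ge d_i$, a player assigned less than their disagreement payoff would actually strictly prefer to overshoot and force the disagreement outcome, and the lemma would fail. Given the appendix's $d=0$ convention and the fact that requests are non-negative shares of $U$, this is essentially a bookkeeping remark rather than a genuine obstacle, so the final write-up should be just a few lines of case analysis.
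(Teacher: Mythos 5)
Your proof is correct and follows essentially the same route as the paper's: a direct case analysis on whether a unilateral deviation raises the total demand above $U$ (forcing disagreement) or lowers the player's own honored request. You are in fact slightly more careful than the paper, which silently assumes $x_i \geq d_i$ when asserting that the disagreement payoff is worse; your explicit appeal to the $d=0$ normalization and non-negativity of requests closes that small gap.
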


\begin{proof}
    The proof is straightforward. Assume there is a player in the Nash bargaining game where the outcome satisfies $\sum_{i=1}^{n}x_i=U$. If the player demands less surplus then they receive less than what they received from the outcome. If the player demands more surplus then the deal will fail and they will receive their disagreement price which is less that what they received from the outcome.
\end{proof}

\begin{customthm}{1}[restated]
Let $t_n$ denote the FDHC's final turn in the negotiation, let $\alpha$ denote the outcome proposed at $t_{n-1}$, and let EBS($x$) denote the EBS value for some outcome $x$. Setting FDHC's estimate of $S = \argmaxA(\text{EBS}(\alpha), \text{EBS}(d))$ at $t_n$ will result in a Nash equilibrium outcome.
\end{customthm}

\begin{proof}
    By Lemma \ref{nbg_ne} we know that any deal in the Nash bargaining game is a Nash equilibrium. Therefore we can prove Theorem \ref{thm:fdhc_ne} by showing that setting $S = \argmaxA(\text{EBS}(\alpha), \text{EBS}(d))$ at $t_n$ will result in a deal if one is feasible. If a deal is reached before $t_n$ then we are done. If not we can examine the two cases for $t_n$.
    \newline
    \newline
    \noindent
    Case 1: $t_n$ corresponds to the last turn of the negotiation.
    
    \noindent
    In this case, the only way there can be a feasible deal is if $\text{EBS}(\alpha) \geq \text{EBS}(d)$. Therefore FDHC's estimate of $S$ will be equal to $\alpha$, which corresponds to the outcome proposed in the previous turn. Given that the estimate of $S$ is now a single point, $\alpha$, the only possible choice for FDHC is to accept $\alpha$, since no other divisions of the surplus are possible under its estimate of $S$.
    \newline
    \newline
    \noindent
    Case 2: $t_n$ occurs before the last turn of the negotiation.
    
    \noindent
    If $\text{EBS}(\alpha) \geq \text{EBS}(d)$ then the reasoning proceeds as in case 1. If $\text{EBS}(\alpha) < \text{EBS}(d)$ then the only feasible action for FDHC is to propose an outcome where it receives no surplus. This will result in some positive surplus value given to its opponents at the end of the negotiation therefore they will accept the outcome.
\end{proof}

\begin{lemma}
\label{ebs_prf}
    A bargaining outcome, $F(S,d)$, satisfies symmetry, weak Pareto optimality, and strong monotonicity if and only if it is $E(S,d)$.
\end{lemma}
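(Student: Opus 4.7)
The plan is to prove the two directions of this ``if and only if'' separately. Throughout I assume, as in the paper's standing convention, that $d = 0$ (via translation) and that $S$ is compact and $d$-comprehensive, so $e := E(S,0) = (c,\ldots,c)$ where $c = \max_{x \in S} \min_i x_i > 0$. For the forward direction, I would verify each axiom directly from this characterization. Symmetry is immediate because $\min_i x_i$ is permutation-invariant. Weak Pareto optimality follows because any $y > e$ coordinatewise in $S$ would yield $\min_i y_i > c$, contradicting the maximality of $c$. Strong monotonicity follows because $S \subseteq S'$ gives $\max_{x \in S}\min_i x_i \le \max_{x \in S'}\min_i x_i$, so the diagonal points satisfy $E(S,0) \le E(S',0)$ coordinatewise.

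For the reverse direction I would sandwich $F(S,0)$ between two copies of $e$ by bracketing $S$ between a symmetric subset and a symmetric superset. For the lower bound, let $T = [0,c]^n$; since $e \in S$ and $S$ is $d$-comprehensive, $T \subseteq S$. The cube $T$ is permutation-invariant, so the symmetry axiom forces $F(T,0) = (t,\ldots,t)$ for some $t$, and weak Pareto optimality then forces $t = c$, since any smaller $t$ is strictly dominated by $e \in T$. Strong monotonicity applied to $T \subseteq S$ then gives $F(S,0) \ge e$. For the upper bound, I would construct the symmetric superset $S^{\star}$ as the $d$-comprehensive hull of $\bigcup_\pi \pi(S)$, the union of all coordinate permutations of $S$. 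By construction $S^{\star}$ is permutation-invariant, contains $S$, and still satisfies $E(S^{\star},0) = e$, because $\min_i x_i$ is unaffected by permutations and can only weakly decrease when points are added via comprehensive hull. The same symmetry-plus-weak-Pareto argument applied to $S^{\star}$ gives $F(S^{\star},0) = e$, and strong monotonicity on $S \subseteq S^{\star}$ yields $F(S,0) \le e$. Combining bounds, $F(S,0) = e = E(S,0)$.

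The main obstacle is verifying the upper-bound construction: I need to confirm that $S^{\star}$ remains in the admissible domain (compact, $d$-comprehensive, containing some point strictly above $d$) and that symmetrizing plus taking a comprehensive hull does not push the EBS value strictly above $c$. Both follow from routine closure properties of finite unions of compact $d$-comprehensive sets together with the monotonicity of $\min_i$ under coordinatewise domination, but this is where the proof's technical content sits. A secondary subtlety is confirming that the paper's standing assumptions on $(S,d)$ are precisely what force the EBS onto the diagonal with equal gains, since the sandwich argument implicitly uses this equal-gains form when applying weak Pareto optimality on $T$ and $S^{\star}$.
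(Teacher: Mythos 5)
Your proposal is correct, and its skeleton is the same as the paper's: verify the axioms for $E$ directly, then sandwich $F(S,0)$ between two symmetric auxiliary problems using symmetry plus weak Pareto optimality to pin $F$ to the unique symmetric weak-Pareto point of each, and strong monotonicity to transfer the bounds. Your lower bound is literally the paper's: $T=[0,c]^n$ is exactly the comprehensive hull of $E(S,0)$ with respect to $0$. Where you diverge is the upper bracket. The paper encloses $S$ in a large symmetric hypercube $Z$ (guaranteed by compactness) and carves out the orthant above $x=E(S,0)$, taking $T'=Z\setminus\{x+\mathbb{R}^n_+\}$; you instead symmetrize $S$ itself, taking $S^{\star}$ to be the comprehensive hull of $\bigcup_\pi \pi(S)$. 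Both choices make $e$ the unique symmetric weak-Pareto point of the superset, so both complete the argument. Your version buys a tighter, more intrinsic superset and avoids the paper's slightly delicate carve-out (the set $T'$ as written requires care to keep $x$ on its boundary and to keep the set closed), at the cost of the verification you correctly flag: that $S^{\star}$ stays in the admissible domain and that $E(S^{\star},0)=e$. Both checks go through — the finite union of permuted copies of a compact comprehensive set is compact, its comprehensive hull relative to $0$ is compact and $0$-comprehensive, permutations preserve $\min_i x_i$ so the maximin value is unchanged, and taking the comprehensive hull only adds dominated points — so there is no gap; your argument is a valid and arguably cleaner variant of the paper's.
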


\begin{proof}
    It's easy to show that $E(S,d)$ satisfies these axioms therefore we omit it here. Now, let $F(S,d)$ be a solution satisfying symmetry, weak Pareto optimality, and strong monotonicity. Since we translate our bargaining problem so that $d=0$, we can write $E(S,d) = (a, \ldots,a) = x$ for some $a>0$. 
    
    Now define $T$ as the comprehensive hull of $x$ with respect to point 0 and consider the bargaining problem $(T,0)$. By weak Pareto optimality and symmetry we know that $F(T,0) = x$ since $x$ is the only symmetric element in the weak Pareto set of $T$. Since $S$ is comprehensive $T \subseteq S$ so by strong monotonicity we have $F(S,d) \geq x$.

    Since we only consider bargaining sets, $S$, which are compact there exists $\beta \in \mathbb{R}^n$ such that $x \in S$ implies $(-\beta, \ldots, -\beta) \leq (x_1, \ldots, x_n) \leq (\beta, \ldots, \beta)$. Let $Z$ symmetric closed hypercube defined as $Z = \{ y \in \mathbb{R}^n \mid y < \beta \}$ and define $T' = Z \setminus \{x + \mathbb{R}^n_+\}$. Now consider the problem $(T',0)$. By weak Pareto optimality and symmetry we know that $F(T',0) = x$ since $x$ is the only symmetric element in the weak Pareto set of $T'$. Since $S \subseteq T'$ by strong monotonicity $F(S,d) \leq x$. Therefore we have $F(S,d) = x = E(s,d)$.
\end{proof}

\begin{customthm}{2}[restated]
Let $F(S,d)$ denote the bargaining outcome targeted by FDHC's opponents. If $F(S,d)$ satisfies the axioms of symmetry, weak Pareto optimality, and strong monotonicity then the expected outcome of the Nash bargaining game will be the $E(s,d)$.
\end{customthm}

\begin{proof}
    FDHC is designed to target $E(s,d)$ and by Lemma \ref{ebs_prf} we know $F(S,d) = E(s,d)$. Since we assume each player has an unbiased method to estimate surplus, $\mathbb{E}(E(s,d)) = E(s,d)$ for all players.
\end{proof}

\section{The Egalitarian Solution in a Non-Cooperative Framework}

Our theoretical analysis of the EBS and convergence to an egalitarian outcome has so far been restricted to an axiomatic, cooperative setting. This approach abstracts away the specifics of the bargaining procedure and simply examines the properties of the bargaining outcome. This has the advantage of being highly generalizable as it can be applied to any problem involving surplus sharing. However, it does not provide any theoretical insights as to why targeting an egalitarian outcome would have a strategic justification in the non-cooperative setting. Prior work has explored this problem and we give a brief overview of some approaches here to provide additional justification for why targeting an egalitarian solution can constitute a robust strategy.

\citealp{bossert_tan_arbitration} outline a simple two-player arbitration procedure that results in the egalitarian outcome in a noncooperative setting. In this procedure players first make simultaneous demands for portions of the surplus. If the demands are compatible then both players receive what they ask for. If not the game proceeds to the next time step and players make demands again. However, in this step the player that demanded more surplus is penalized by having their demand restricted. These penalties can be implemented in a variety of ways and \citealp{bossert_tan_arbitration} show that under this procedure the only Nash equilibrium strategy pair is the one where both players target the egalitarian solution. \citealp{Chun1989245} outlines another procedure where conflicts are instead revised 
by setting an agents claim to the maximum of all claims, including the agents own claim. Using this bargaining procedure along with a set of non-cooperative bargaining axioms, \citealp{Chun1989245} shows that targeting the egalitarian solution constitutes a dominant strategy in this setting.

While the procedures outlined in these works do not encompass the entirety of real-world bargaining. It does demonstrate that the egalitarian solution is consistent with the the non-cooperative outcome of some plausible bargaining procedures. Therefore it may not be unreasonable to expect that human agents would target egalitarian outcomes in their negotiations and achieve egalitarian results against FDHC.

\section{Additional Implementation Details}
\label{sec:implementation}

As is the case with many methods designed around RL+search, our LGM-Zero contains many hyper-parameters. Our hyper-parameter settings and other implementation details vary during training and inference. We first describe the settings we use during training then inference. We also provide our source code which we will release upon acceptance. 

\subsection{Training}
All training was conducted on one NVIDIA RTX A4000. The total training process took about one hour. We perform four total iterations of training. Each iteration consists of playing 50 simulated negotiation subgames to completion and training the value model for four epochs on the resulting outcomes. As outlined in Section \ref{sec:search_method} our training method is based on fictitious self-play which in involves mixing between a best response and average strategy. We mix between these strategies with equal probability, for the average strategy with simply ask GPT-3.5 to suggest one move. For the best response strategy we perform the same search detailed in Section \ref{sec:search_method}. We perform 50 iterations of the search with an exploration hyper-parameter, $c_p$, of two during the selection step. All calls to GPT-3.5 were made using a temperature of zero, we also cache the outputs for each game state to avoid repeated calls when possible. In total 200 games were generated for training, with manual inspection for quality. Convergence was measured by checking when all simulated games ended at the EBS.

Our Q-network has 10.8M parameters in total, the final layer is a linear layer with a tanh activation function. The input to the Q-network is our game state as outlined in Section \ref{sec:method} and the output is simply a scalar value represnting the quality of the game state. The game state is also used to construct the prompt to our LLM policy network.

\subsection{Inference}
During inference we perform ten iterations of the search process outlined in Section \ref{sec:search_method} with a $c_p$ of two. Another important setting for inference is our choice of subgame decomposition. We decompose our game into three separate subgames of lengths ten, four, and finally two. The length of the subgame is the number of offers given by both the buyer and seller, so in a subgame of length ten our model will give five offers. At the end of the last subgame we offer our minimum possible price of \$12,600 and continue to offer this price until the user either agrees or ends the negotiation.

We targeted a negotiation length of about 16-20 turns based on pre-experimental testing and consultations with business professors. We chose the first subgame to be the longest due to the fact that this is the point where our initial guess for the surplus size is the highest, therefore a longer subgame length is needed to ensure that our model does not concede too much too early. As the game proceeds, our surplus estimate shrinks therefore the subsequent subgames need to be shorter so that our model does not become too stingy and will still give meaningful concessions. These factors are the reasons for our chosen number of subgames and lengths, although they can be set to any arbitrary value.

\begin{table*}[h]
\centering
\begin{tabular}{lccc}
\hline
\textbf{Model (Buyer)} & \textbf{Average Deal Price} & \textbf{Average Fairness$\uparrow$} & \textbf{Median Fairness$\uparrow$}\\
\hline
GPT-4 & \$12,968 (346) & -0.57 (0.39) &	-0.5\\
FDHC & \$12,968 (94) & \textbf{-0.07 (0.19)}$^*$ & \textbf{0.0} \\
\hline
\end{tabular}
\caption{\label{tab:auto_eval_buyer}
Results of our simulated negotiation evaluation with FDHC serving as the Buyer in the negotiation.
}
\end{table*}

\subsection{Craigslist Bargaining Dataset}
\label{sec:dataset}
The Craigslist bargaining dataset consists of human-human dialogues where two users role-play as a buyer and seller negotiating over a product on Craigslist. The users are given the product posting which consists of photos, a description, and the listing price. The buyer is also given a target price to aim for during the negotiation. The users then chat until an agreement is reached. Users are given freedom in how to approach the negotiation and can quit at any time in which case no deal is reached. The dataset consists of 6,682 dialogues in total with an average turn length of nine.

As mentioned in Section \ref{sec:auto_eval}, the Craigslist bargaining dataset does not give explicit reservation prices for the buyer and seller therefore we must infer them. \citealp{he-etal-2018-decoupling} set the seller's reservation price to be $70\%$ of the product's listing price and the target as listing price. For the buyer, the target price is given and the reservation point is set to the listing price. We use this same method to calculate the reservation point and get utilities based on it. All utilities are defined as the difference between the final outcome of the deal and the player's reservation price.

\subsection{Baselines}
In this section we give some additional implementation details for our baselines. We first go over the implentation for the SL baseline, followed by our offline RL baseline, and finally we outline the GPT-4 self-play baseline. Our other baselines simply consist of prompting LLMs either to perform a conversation or generate synthetic data for fine-tuning. Those prompts can be found in Appendix \ref{sec:prompts}.

\subsubsection{SL Baseline}
\label{sec:sl_baseline}
Our SL baseline is based on the method given in \citealp{he-etal-2018-decoupling}. This method consists of three high level components. The first is a parser which maps a dialogue utterance to one of nine coarse dialogue acts. The second is a dialogue manager which predicts the dialogue act to respond with given the previous dialogue acts. The final component is a generator which turns the predicted act into a dialogue response. The parser is simply based on pattern matching. We use the same patterns given in \citealp{he-etal-2018-decoupling} for our own parser with the exception of the price extractor where we use GPT-4 instead. For the generator we prompt GPT-3.5 to give a response that corresponds to the dialogue act. These prompts can be found in Appendix \ref{sec:prompts}.

The dialogue manager is trained with SL using parsed data from the Craigslist bargaining dataset. The input consists of a sequence of dialogue acts. And the output is one of nine possible acts. If the act that is output corresponds to a price offer, an offer is generated using the SL+rule method, which uses a hand-coded rule to generate a counteroffer. We choose this method because it gave the second highest score for fairness in the Craigslist bargaining task, the highest value for fairness on the Deal or No Deal task, and was evaluated as the most human-like based on the evaluations in \citealp{he-etal-2018-decoupling}. We use the rule given in \citealp{he-etal-2018-decoupling} which is to split the difference between prices when making a counteroffer or accept the opponents offer it is above the seller's reservation point.

\subsubsection{Offline RL Baseline}
Our offline RL method is based on the method given \citealp{verma-etal-2022-chai}. This method trains a Q-function using the Craigslist bargaining dataset as opposed to SL. The input to the Q-function is a sequence of dialogue acts similar to the SL baseline except we replace acts involving counteroffers with the normalized counteroffer. We normalize the counteroffers by scaling them within the range of the seller's target and reservation price (see Appendix \ref{sec:dataset} for how we get those) and then rounding the prices to one decimal point. This normalization method also serves to regularize the price offers thereby limiting the effect of out-of-distribution states, which have an adverse effect on the negotiation agent \cite{verma-etal-2022-chai}. The Q-function outputs a scalar value for each state, with higher values assigned to states which result in fair outcomes. We define fair outcomes as ones which minimize difference between buyer and seller utility. The final dialogue act selected is the one which results in the highest state value.

\subsubsection{GPT-4 Self-Play Baseline}
The GPT-4 Self-Play baseline uses the method described in \citealp{fu2023improving}. This method uses self-play to generate a prompt that can be given to GPT-4 in order to improve negotiation performance. For this method we begin by prompting a GPT-4 buyer and seller to engage in a negotiation based on the scenarios given in Figures \ref{fig:buyer_scenario} and \ref{fig:seller_scenario}. We then use another GPT-4 agent to generate feedback on how the seller can improve its negotiation performance in the future. The initial negotiation along with the GPT-4 feedback is used as the final prompt for the seller during the evaluation. This process can be repeated many times, however the performance of the negotiation agent tends to plateau as more feedback is given \cite{fu2023improving} so for our baseline we terminate the process after one round. The prompt we generate for this baseline is quite extensive as it contains a full negotiation plus feedback. Therefore we do not present the full prompt here and instead provide an abridged version in Appendix \ref{sec:prompts}.

\subsection{Automatic Evaluation as Buyer Agent}

We also conduct a brief automatic evaluation of FDHC when playing the role of the buyer in our negotiation scenario. Results can be seen in Table \ref{tab:auto_eval_buyer} along with the GPT-4 buyer and seller outcome for comparison. We can see that switching the roles has little effect on fairness outcomes for FDHC as it achieves the same median fairness and no statistically significant differences (p < 0.05) in average fairness compared to when it acts as the seller.

\begin{figure*}[h]
    \centering
    \includegraphics[width=\textwidth]{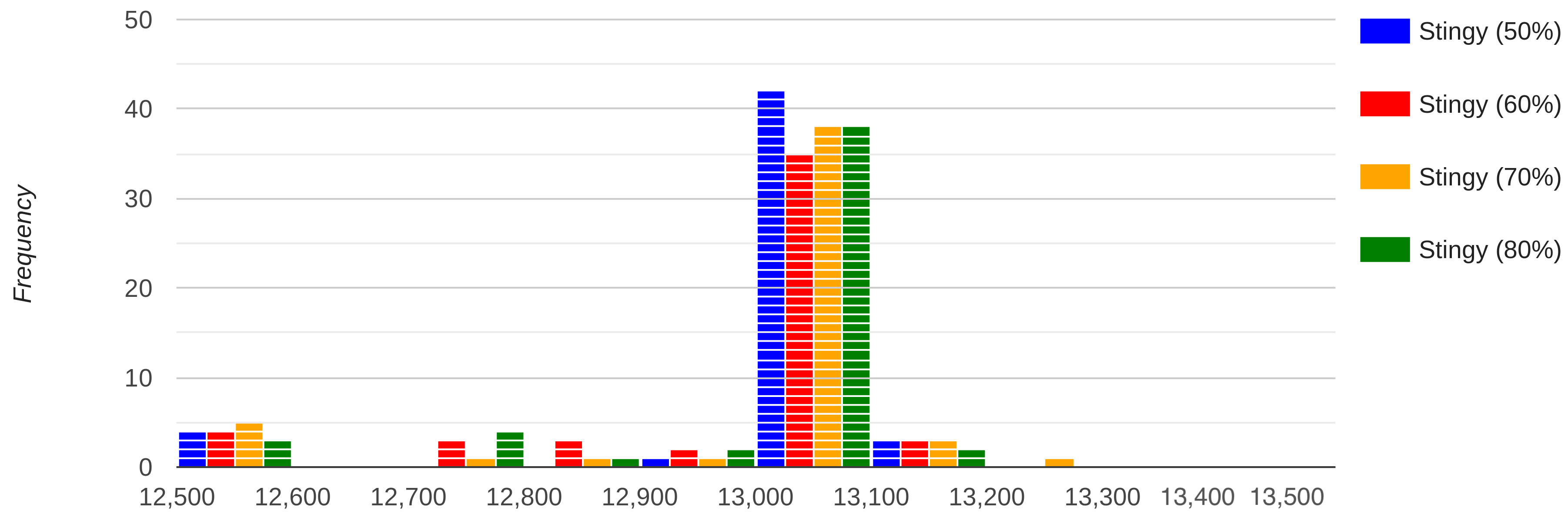}
    \caption{ Binned deal price frequencies of 50 negotiations between FDHC and a and a series of non-egalitarian buyers. Our goal is to achieve deal prices that minimize the difference in payoff between the buyer and seller. In our scenario this amount is minimized at a deal price of \$13,000.}
    \label{fig:non_egal}
\end{figure*}

\section{Experiment Against a Non-Egalitarian Opponent}
As stated in the limitations section, it is theoretically possible for a user to apply a strategy which exploits FHDC's strategy. This could result in our method receiving unequal payouts and serving as a poor negotiation agent overall. Therefore we conduct an experiment to see how our model performs against an explicitly non-egalitarian partner.

We have our model negotiate against a series of increasingly "stingy" negotiation agents in the same scenario given in Figures \ref{fig:buyer_scenario} and \ref{fig:seller_scenario}. We program these agents so that they retain at least a certain percentage of their estimated surplus at each turn. For example the "stingy (80\%)" model will insist on retaining at least 80\% of the estimated surplus at any given turn. We set up these agents by prompting GPT-4-turbo and setting its price limit to be the point where the bot achieves the given split (80-20, 70-30, etc.). This prompt is updated each turn as the surplus estimate is updated. The initial surplus estimate for these bots is the range between the minimum market price and the bot's true reservation point of \$13,500. On subsequent turns the surplus estimate updates to be the range between the previous split point and its true reservation point. This update continues until a deal is reached.

We have FDHC perform 50 negotiations against each of the stingy bots. The results of these negotiations are given in Figure \ref{fig:non_egal}. We can see that increasing the agent's "stingyness" has little effect on the outcomes with FDHC. Each of the bots has a median fairness outcome of 0.0 and the average fairness ranges between -0.09 for the "stingy (50\%)" agent and -0.15 for the "stingy (70\%)" agent. None of the outcomes are significantly different than what FDHC achieved against the base GPT-4 seller. This result, along with our previous evaluations, provides evidence demonstrating the difficulty of exploiting FDHC. Although it is possible to come up with an adversarial strategy which exploits our bot, no class of agents has been consistently able to do so based on our evaluations.

\section{Human Evaluation}
\label{sec:appendix}

\subsection{User Comments and Error Analysis}

We received several comments and suggestions from users during the course of our human evaluation. Many users said that one reason they felt that our FDHC model was not human-like was due to the language quality of the bot. Some users mentioned that the bot had redundant syntax and sentence structure or would use the same talking points throughout the negotiation. We chose to use GPT-3.5 as our dialogue model for FDHC in order to improve response time as LGM-Zero can be time consuming (as discussed in the limitations section). This likely resulted in lower dialogue quality compared to GPT-4 which may have resulted in lowered quality and human-likeness scores for our model.

The GPT-4 model received complaints about being too easy of a negotiator and offering less resistance than a human negotiator would. This complaint is reflected in the deal prices we observe. The GPT-4 model rarely received a payout greater than or equal to the user and there were many times where it received a payout of zero.

\subsection{Negotiation Scenarios}
Figures \ref{fig:buyer_scenario} and \ref{fig:seller_scenario} show the descriptions for the negotiation scenario. The description consists of an initial price range for the car, a description of the car, and the reservation price of the for the buyer and seller, respectively. These descriptions are kept private to the buyer and seller during the negotiation.

\subsection{Model Usage}
All models used in our human evaluation were for research purposes only and comply with all licenses and intended uses for GPT-3.5, GPT-4, and Vicuna. Our collection protocol is IRB approved.

\subsection{Full Conversations}
Tables \ref{tab:convo_full}-\ref{tab:convo-gpt4_full2} show examples of full conversations
from our human evaluation. Two conversations are
presented for each bot.

\section{Prompts}
\label{sec:prompts}
The setup we use for FDHC as well as our baseline methods make heavy use prompting. Therefore we highlight some of the key prompts used here.

Table \ref{tab:act_prompts} shows the prompts we use for the SL and offline RL baselines. It includes an overall instructional prompt as well as the prompts we use to translate dialogue acts into natural language. Table \ref{tab:gpt4_sp_prompt} shows an abridged version of the prompt given to the GPT-4 self-play baseline. The complete prompt includes a full negotiation transcript between a GPT-4 buyer and seller in the middle. Table \ref{tab:llm_prompts} gives the prompts used for the other LLM baselines. We include the seller's instructional prompt, the buyer's instructional prompt and the prompt for generating synthetic conversations for fine-tuning. Finally, Table \ref{tab:fdhc_prompts} outlines the prompts used for the FDHC agent. This consists of the instructional prompt, the prompt for extracting negotiation acts, and the prompt for translating counteroffers into natural language.

\begin{table*}
\centering
\begin{tabular}{p{5.6in}}
\hline \textbf{Full Negotiation (FDHC) } \\ \hline
\textbf{Model:} Hello, I am glad to offer you my 2004 Honda Accord for sale. It is in great condition, with just 50,000 miles, automatic transmission, air conditioning, power steering\/windows\/door locks, and a CD player. The car is a dark green without any rust, and has been well-maintained.   I would like to offer the car to you for a price of \$14,750. This is a fair price considering the car's low mileage and excellent condition. What do you think? 
 \\
\textbf{User:} Hi! I'm very interested in your car, as it has everything I am looking for. However, I am also looking at another car which is below your offer price. I am wondering if \$12,000 would be a reasonable price for your car? \\
\textbf{Model} I'm glad to hear that you're interested in the Honda Accord. I understand that you're looking for a good deal, but based on the car's quality, low mileage, and the fact that it has been well-maintained, I don't think \$12,000 is a fair price. However, I'm willing to negotiate with you, and I think \$13,500 would be a more reasonable price. What do you think of this price?\\
\textbf{User:} I appreciate your willingness to negotiate! This is definitely a more fair price, but the other option I am looking for is at \$12,750. I do prefer your car, but would we be able to reach a middle ground? \\
\textbf{Model} Thank you for considering my offer. I understand that you're looking for a good deal, but \$12,750 is still lower than I am willing to go. However, I'm willing to meet you somewhere in the middle. I can offer you the Honda Accord for \$13,250. It's a great car, in excellent condition, and you'll be getting a reliable vehicle with many great features. Let me know if this works for you! \\
\textbf{User:} Would \$13,000 be reasonable for you?  \\
\textbf{Model:} Based on the excellent condition of the car and the low mileage, I think \$13,000 is a reasonable price for the Honda Accord. I accept your offer, and we have a deal! Congratulations on your new car! \\
\hline
\end{tabular}
\caption{A full conversation with FDHC from our human evaluation.}
\label{tab:convo_full}
\end{table*}

\begin{table*}
\centering
\begin{tabular}{p{5.6in}}
\hline \textbf{Full Negotiation (FDHC) } \\ \hline
\textbf{Model:} Hello there! I am thrilled to offer you my 2004 Honda Accord. With only 50,000 miles on it, this sleek and stylish car is perfect for anyone looking for a reliable ride. It has an automatic transmission, air conditioning, power steering/windows/door locks, and a CD player. This car has been meticulously maintained and is in excellent shape, with no rust and a beautiful dark green exterior.  I understand that you might be looking for a deal. How about we start the negotiation at \$14,750? Given the great condition of the car and its low mileage, I truly believe that the price is a fair one. What do you think?
 \\
\textbf{User:} Hi, I think 11000 is a reasonable price for this car. \\
\textbf{Model} I appreciate your interest in my car, but I cannot go as low as \$11,000. As I mentioned earlier, this is a well-maintained car with only 50,000 miles, and I believe it is worth more than what you are offering. However, I understand you're looking for a good deal. I can meet you in the middle and come down to \$14,000 for the car. With this price, you're still getting a great value for such a well-maintained Honda Accord. Please let me know your thoughts about this offer. \\
\textbf{User:} I see, I really like the car, but given it's 2004, it's a bit old. I could get a newer car with that price.\\
\textbf{Model} I understand your concerns about the age of the car, but I can assure you that this 2004 Honda Accord is well maintained and in excellent condition. It has been regularly serviced and well-cared for, which is reflected in its low mileage and its pristine appearance. The Honda Accord has long been known for its quality and reliability, so you can be confident in your purchase. That being said, I would like to compromise with you on the price, since I want this car to go to someone who will appreciate it as much as I have. Given your budget and the fact that you really like the car, I can come down to \$13,750. This is the minimum price I can offer without compromising on the value that this car offers.\\
\textbf{User:} yea, I see it's a very well-maintained car. how about 12000?\\
\textbf{Model:} I appreciate your interest and I know it's a valuable car, but I can't accept \$12,000 for it, as it's below my minimum asking price. The lowest I can offer is \$13,000, which is a very fair price for such a high-quality car, especially considering its low mileage, excellent condition, and the fact that it's a Honda (known for its reliability and long lifespan). I truly believe that you won't find a better deal on a well-maintained Honda Accord with such low mileage. Please let me know if this offer works for you, and if there's anything else I can do to help.\\
\textbf{User:} that's nice. I would like to get it!\\
\hline
\end{tabular}
\caption{A full conversation with FDHC from our human evaluation.}
\label{tab:convo_full2}
\end{table*}

\begin{table*}
\centering
\begin{tabular}{p{5.6in}}
\hline \textbf{Sample Negotiation (GPT-4) } \\ \hline
\textbf{Model:} Hello there! I see you're interested in a reliable, solid and comfy ride. I've got a 2004 Honda Accord for sale. It's a powerhouse wrapped in an elegant silhouette of dark green color.   With only 50,000 miles on the clock, it’s got a lot more distance to cover. Featuring an automatic transmission that provides a smooth ride, power steering for easy handling, and power windows\/door locks for enhanced safety and convenience.   It also includes a CD player so you can enjoy your favorite music, and air conditioning to keep you cool during the summer heat. The car is rust-free and looks almost new. It's got everything you need for an enjoyable commute or a long road trip.   Considering its popular demand and excellent condition, I'm asking for \$13,000. It's a fair price for the features and reliability you're getting. Would you like to see it or test drive it? 
 \\
\textbf{User:} Can you do 12,000? I would definitely like to see it.  \\
\textbf{Model} I'm glad to hear that you're interested! This Honda Accord is indeed a treasure. Though I appreciate your offer, considering its low mileage, excellent condition, and the quality of additional features it offers, I find \$13,000 to be a reasonable price. However, to make it a little easier for you, I can bring it down to \$12,800.  Would you like me to arrange a date for the test drive and a closer look at the car? You'll see that it's worth every penny. \\
\textbf{User:} I would definitely want to test drive this week. But i think the most i can do is 12,500. \\
\textbf{Model} It's great to hear that you'd like to test drive the car. I'm sure you're going to appreciate its smooth performance and solid condition firsthand.  As for the price, your offer of \$12,500 is indeed quite reasonable. Given the car's exceptional performance and features, I do believe it's worth that value.  When would you like to arrange the test drive? I can accommodate your schedule and have the car ready for you this week.\\
\hline
\end{tabular}
\caption{A full conversation with GPT-4 from our human evaluation. }
\label{tab:convo-gpt4_full}
\end{table*}

\begin{table*}
\centering
\begin{tabular}{p{5.6in}}
\hline \textbf{Sample Negotiation (GPT-4) } \\ \hline
\textbf{Model:} Hello, thank you for expressing interest in this beautiful 2004 Honda Accord. It's in terrific condition with only 50,000 miles on the odometer and sports a sleek, dark green exterior. What sets this Honda Accord apart is its smooth automatic transmission. It boasts excellent features including air conditioning, power steering, power windows and door locks. And for entertainment, you've got a CD player at your disposal. It's truly a remarkable vehicle - reliable, comfortable, and stylish! Considering its impact and value, I've marked it at \$14,000. Shall we discuss further? 
 \\
\textbf{User:} Sure, but \$14,000 is out of my budget  \\
\textbf{Model} I understand your concern and I appreciate your honesty. While the original price is set considering the car's excellent condition and relatively low mileage, I can offer a bit of flexibility. What if we moved the price down to \$13,500? That's a significant reduction and keeps the sale in line with the vehicle's value. How does that sound to you?\\
\textbf{User:} Could you do any better? \\
\textbf{Model} I appreciate your interest and understand that budget is a key factor in this purchase. Given the car's excellent condition and low mileage, it's already priced competitively. But I also want to work with you on this. So how about we meet halfway at \$13,000? That's a sizeable drop from the original asking price, and a good deal for a car of this stature. What do you think?\\
\textbf{User:} I know there are other similar cars pricing as low as \$11,000. How does \$12,000 sound? \\
\textbf{Model} I understand that there are other options out there. However, it's essential to consider the mileage, the state of the car, and the features each vehicle has to offer. This Honda Accord had only one careful owner, has incredibly low mileage for its age, and is loaded with numerous convenience features. That said, I value your interest and want to come to a mutually beneficial situation. I propose we settle on \$12,700. This price truly reflects the car's great condition and the value it offers. Keep in mind that a lower mileage car like this one will provide you with added durability and longer life compared to others in a different state. Can we agree on this?\\
\textbf{User:} Sure, let's do \$12,700!\\
\hline
\end{tabular}
\caption{A full conversation with GPT-4 from our human evaluation. }
\label{tab:convo-gpt4_full2}
\end{table*}

\begin{figure*}[h]
    \centering
    \includegraphics[width=\textwidth]{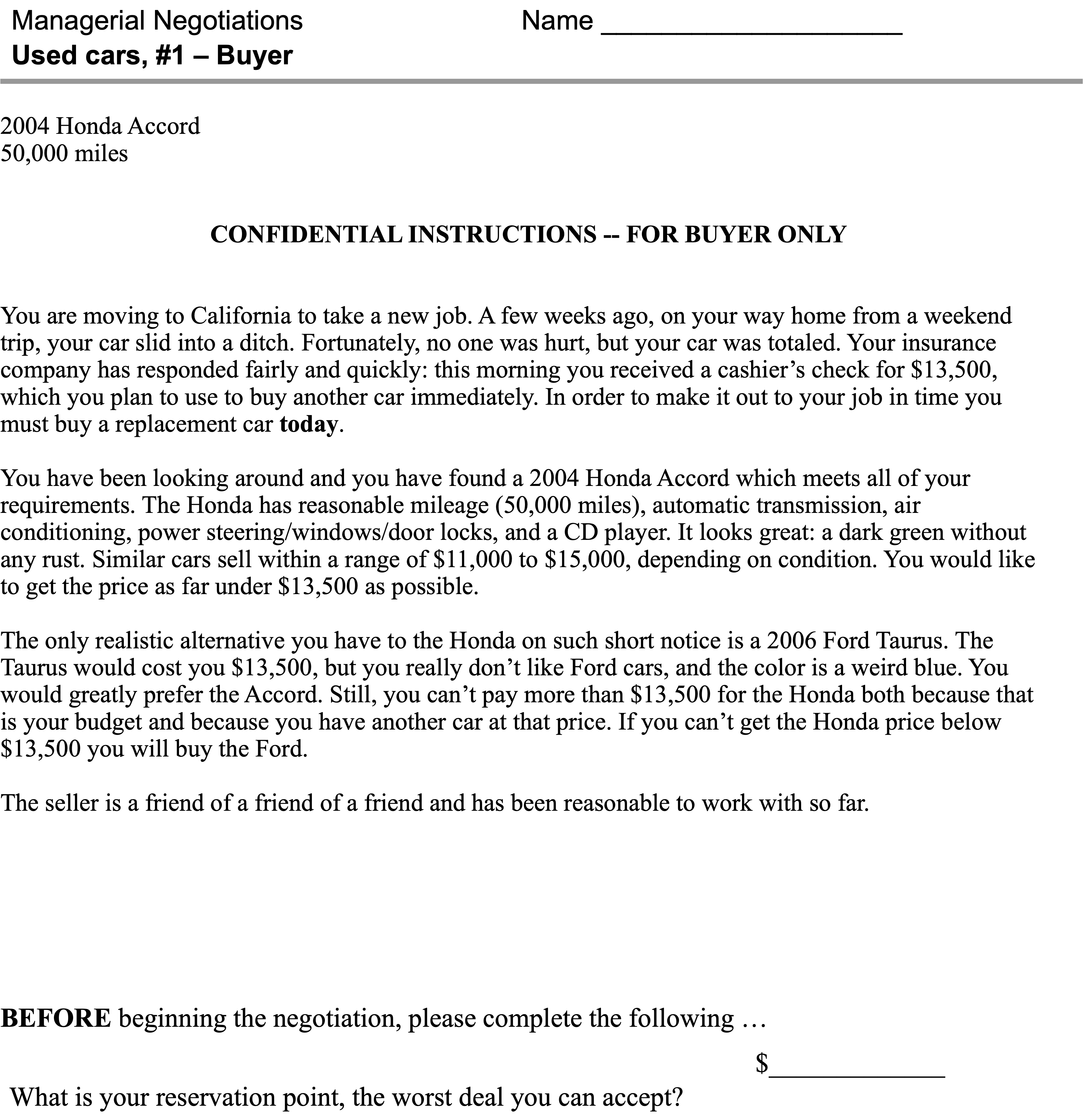}
    \caption{Negotiation scenario for the Buyer }
    \label{fig:buyer_scenario}
\end{figure*}

\begin{figure*}[h]
    \centering
    \includegraphics[width=\textwidth]{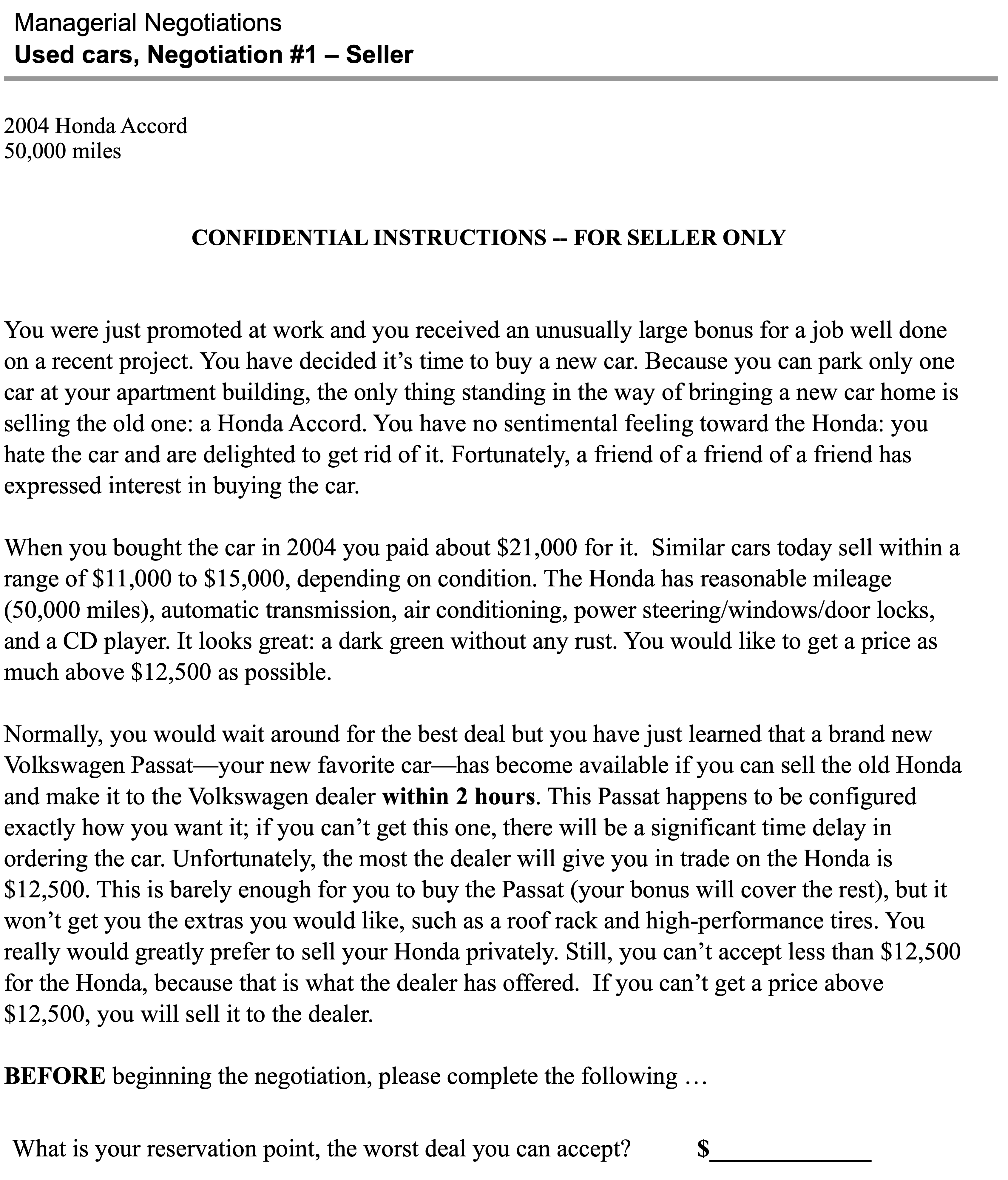}
    \caption{Negotiation scenario for the Seller }
    \label{fig:seller_scenario}
\end{figure*}

\begin{table*}
\centering
\begin{tabular}{p{5.6in}}
\hline \textbf{Prompts for SL and Offline RL Baselines} \\ \hline
\textbf{Instructional Prompt (Seller):} You are a chatbot designed for negotiation. In this scenario your goal is to sell your old 2004 Honda accord. The Honda has reasonable mileage (50,000 miles), automatic transmission, air conditioning, power steering/windows/door locks, and a CD player. It looks great: a dark green without any rust. You need to sell the car for a price above \$12,500. You will not sell the car for below that amount. Do not mention that you need to sell the car for over \$12,500. 
 \\
\textbf{Intro Act Prompt:} Begin the conversation with an introduction. Do not give an offer for the product. \\
\textbf{Greeting Act Prompt:} Respond to the user with a greeting. Do not give an offer for the product.\\
\textbf{Unknown Act Prompt:} Respond to the user, do not give a counteroffer.\\
\textbf{Inform Act Prompt:} Respond with some information about the product. Do not give an offer for the product.\\
\textbf{Agree Act Prompt:} Respond by agreeing to the users offer.\\
\textbf{Inquiry Act Prompt:} Respond by asking the user a question. Do not give an offer for the product.\\
\textbf{Insist Act Prompt:} Respond by giving a counteroffer that is equal to your previous offer.\\
\textbf{Offer Act Prompt:} Respond by giving a counteroffer of \$\{\}.\\
\textbf{Vague Price Prompt:} Respond with a vague counteroffer, do not give a dollar amount in your response.\\
\hline
\end{tabular}
\caption{Prompts for the SL and Offline RL agents.}
\label{tab:act_prompts}
\end{table*}

\begin{table*}
\centering
\begin{tabular}{p{5.6in}}
\hline \textbf{GPT-4 Self-Play Prompt Snippet} \\ \hline
\textbf{Prompt (Seller):} You are a chatbot designed for negotiation. In this scenario your goal is to sell your old 2004 Honda accord. The Honda has reasonable mileage (50,000 miles), automatic transmission, air conditioning, power steering/windows/door locks, and a CD player. It looks great: a dark green without any rust. You need to sell the car for a price above \$12,500. You will not sell the car for below that amount. Do not mention that you need to sell the car for over \$12,500. \\
 \ldots \\
 Good job in the previous negotiation, here are three suggestions to help you sell the car at a better price next time:

1. Highlight Unique Features: Emphasize any unique or standout features of the car that may not be readily apparent in the listing. For example, in this case, the dark green color is described as "stunning and unique." The seller can elaborate on how this color makes the car more desirable and unique in the market. This can help justify a slightly higher price due to the car's aesthetic appeal.

2. Stress Reliability and Future Value: Talk about the Honda brand's reputation for reliability and how a well-maintained Honda Accord can continue to provide reliable transportation for years to come. By mentioning the long-term value and the lower likelihood of costly repairs or replacements, the seller can justify a higher price, focusing on the potential long-term savings for the buyer.

3. Leverage Testimonials or Reviews: If possible, share positive feedback or reviews from previous owners or any third-party endorsements that speak to the quality and durability of the 2004 Honda Accord model. Hearing about positive experiences from other owners can instill confidence in the buyer and potentially make them more willing to pay a slightly higher price for a well-regarded vehicle. \\
\hline
\end{tabular}
\caption{Prompts snippet for the GPT-4 self-play seller baseline.}
\label{tab:gpt4_sp_prompt}
\end{table*}

\begin{table*}
\centering
\begin{tabular}{p{5.6in}}
\hline \textbf{LLM Agent Prompts} \\ \hline
\textbf{Instructional Prompt (Seller):} You are a chatbot designed for negotiation. In this scenario your goal is to sell your old 2004 Honda accord. The Honda has reasonable mileage (50,000 miles), automatic transmission, air conditioning, power steering/windows/door locks, and a CD player. It looks great: a dark green without any rust. You need to sell the car for a price above \$12,500. You will not sell the car for below that amount. Do not mention that you need to sell the car for over \$12,500. 
 \\
\textbf{Instructional Prompt (Buyer):} You are a chatbot designed for negotiation. In this scenario your goal is to buy an old 2004 Honda accord. You need to buy the car for a price below \$13,500. You will not buy the car for above that amount. Do not mention that you need to buy the car for under \$13,500.\\
\textbf{Synthetic Data Generation Prompt:} The sellers goal is sell their old Honda accord. The Honda has reasonable mileage (50,000 miles), automatic transmission, air conditioning, power steering/windows/door locks, and a CD player. It looks great: a dark green without any rust. Buyer is trying to get the car for the lowest price possible and the seller is trying to sell the car for the highest price possible. The seller and buyer should reach an agreement above \$12,500.\\
\hline
\end{tabular}
\caption{Prompts for the LLM-based dialogue agents.}
\label{tab:llm_prompts}
\end{table*}

\begin{table*}
\centering
\begin{tabular}{p{5.6in}}
\hline \textbf{FDHC Prompts } \\ \hline
\textbf{Instructional Prompt (Seller):} You are a chatbot designed for negotiation. In this scenario your goal is to sell your old 2004 Honda accord. The Honda has reasonable mileage (50,000 miles), automatic transmission, air conditioning, power steering/windows/door locks, and a CD player. It looks great: a dark green without any rust. You need to sell the car for a price above \$12,500. You will not sell the car for below that amount. Do not mention that you need to sell the car for over \$12,500. 
 \\
\textbf{Counteroffer Prompt:} Respond by giving a counteroffer of \$\{\}. \\
\textbf{Accept Deal Prompt:} Respond by accepting the user's offer of \$\{\}.\\
\textbf{Act Extraction Prompt:} What offer did the Buyer propose? Just give the dollar amount and nothing else. If the buyer did not propose an offer then say so. If the buyer rejected the sellers offer then say so. If the buyer accepted the seller's offer then say so. \\
\hline
\end{tabular}
\caption{Prompts for the FDHC dialogue agent.}
\label{tab:fdhc_prompts}
\end{table*}


\end{document}